\documentclass{article}
\usepackage{float,graphicx,epstopdf}
\usepackage{bbm}  
\RequirePackage[reqno]{amsmath}
\usepackage{cases}

\usepackage[usenames, dvipsnames]{color}
\definecolor{darkblue}{rgb}{0.0, 0.0, 0.45}

\usepackage[colorlinks	= true,
raiselinks	= true,
linkcolor	= MidnightBlue, 
citecolor	= Mahogany,
urlcolor	= ForestGreen,
pdfauthor	= {},
pdftitle	= {},
pdfkeywords	= {},
pdfsubject	= {},
plainpages	= false]{hyperref}

\usepackage{dsfont,amssymb,amsmath,graphicx,enumitem, subfigure} 
\usepackage{amsfonts,dsfont,mathtools, mathrsfs,amsthm,fancyhdr}
\usepackage[amssymb, thickqspace]{SIunits}
\usepackage{enumitem}

\makeatletter
\def\munderbar#1{\underline{\sbox\tw@{$#1$}\dp\tw@\z@\box\tw@}}
\makeatother

\newtheorem{definition}{Definition}[section]
\newtheorem{theorem}[definition]{Theorem}
\newtheorem{lemma}[definition]{Lemma}

\newtheorem{corollary}[definition]{Corollary}
\newtheorem{proposition}[definition]{Proposition}


\newcommand{\be}{\begin{equation}}
\newcommand{\ee}{\end{equation}}
\newcommand{\bea}{\begin{equation*}\begin{aligned}}
\newcommand{\eea}{\end{aligned}\end{equation*}}
\newcommand{\ds}{\displaystyle}

\newcommand{\R}{\mathbb{R}}

\newcommand{\Max}{\max\limits_}
\newcommand{\Min}{\min\limits_}
\newcommand{\Sup}{\sup\limits_}
\newcommand{\Inf}{\inf\limits_}
\newcommand{\Tr}[1]{\Trace \big[ #1 \big]}

\newcommand{\wh}{\widehat}
\newcommand{\mc}{\mathcal}
\newcommand{\mbb}{\mathbb}

\newcommand{\cov}{\Sigma} 
\newcommand{\covsa}{\wh{\cov}}

\newcommand{\M}{\mc M}

\newcommand{\p}{\mbb P}
\newcommand{\PP}{\mbb P}

\newcommand{\QQ}{\mbb Q}

\DeclareMathOperator{\Trace}{Tr}

\DeclareMathOperator{\st}{s.t.}

\DeclareMathOperator{\KL}{KL}


\newcommand{\PSD}{\mathbb{S}_{+}} 
\newcommand{\PD}{\mathbb{S}_{++}} 
\newcommand{\Let}{\triangleq}
\newcommand{\opt}{^\star}
\newcommand{\eps}{\varepsilon}

\newcommand{\EE}{\mathds{E}}

\newcommand{\m}{\mu}
\newcommand{\msa}{\wh \mu}
\newcommand{\U}{\mc U}

\newcommand{\half}{\frac{1}{2}}
\newcommand{\dualvar}{\gamma}

\newcommand{\N}{\mc N}

\newcommand{\Pnom}{\wh \p}

\graphicspath{{graph/}}

\usepackage{microtype}
\usepackage{graphicx}
\usepackage{subfigure}
\usepackage{booktabs} 

\usepackage{hyperref}



\usepackage[accepted]{icml2020}

\icmltitlerunning{Robust Bayesian Classification Using an Optimistic Score Ratio}

\begin{document}

\twocolumn[
\icmltitle{Robust Bayesian Classification Using an Optimistic Score Ratio}



\icmlsetsymbol{equal}{*}

\begin{icmlauthorlist}
\icmlauthor{Viet Anh Nguyen}{FA}
\icmlauthor{Nian Si}{FA}
\icmlauthor{Jose Blanchet}{FA}
\end{icmlauthorlist}

\icmlaffiliation{FA}{Stanford University}

\icmlcorrespondingauthor{Viet Anh Nguyen}{viet-anh.nguyen@stanford.edu}

\icmlkeywords{binary classification, distributionally robust optimization}

\vskip 0.3in
]



\printAffiliationsAndNotice{}  

\begin{abstract}
    We build a Bayesian contextual classification model using an optimistic score ratio for robust binary classification when there is limited information on the class-conditional, or contextual, distribution. The optimistic score searches for the distribution that is most plausible to explain the observed outcomes in the testing sample among all distributions belonging to the contextual ambiguity set which is prescribed using a limited structural constraint on the mean vector and the covariance matrix of the underlying contextual distribution. We show that the Bayesian classifier using the optimistic score ratio is conceptually attractive, delivers solid statistical guarantees and is computationally tractable. We showcase the power of the proposed optimistic score ratio classifier on both synthetic and empirical data.
\end{abstract}

\section{Introduction}
\label{sect:intro}
We consider a binary classification setting in which we are provided with training samples from two classes but there is little structure within the classes, e.g., data with heterogeneous distributions except for means and covariance. The ultimate goal is to correctly classify an unlabeled test sample of a given feature. This supervised learning task is the cornerstone of modern machine learning, and its diverse applications are flourishing in promoting healthcare \cite{ref:naraei2016application,ref:tomar2013survey}, speeding up technological progresses \cite{ref:rippl2016limit,ref:zhu2014vehicle}, and improving societal values \cite{ref:bhagat2011node,ref:bodendorf2009detecting}. Confronting the unstructured nature of the problem, it is natural to exercise a Bayesian approach which employs subjective belief and available information, and then determine an optimal classifying decision that minimizes a certain loss function integrated under the posterior distribution. Although a consensus on the selection of the loss function can be easily reached, the choice of a class prior and a class-conditional distribution (i.e., the likelihood given the class), two compulsory inputs to the Bayesian machinery to devise the posterior, is more difficult to be agreed upon due to conflicting beliefs among involving parties and limited available data.

Robust Bayesian statistics, which explicitly aims to assemble a posterior inference model with multiple priors and/or multiple class-conditional distributions, emerges as a promising remedy to this longstanding problem. Existing research in this field mainly focuses on robust divergences in the general Bayesian inference framework.
 \citet{walker2013bayesian} identifies the behaviour of Bayesian updating in the context of model misspecification to show that standard Bayesian updating method learns a model that minimizes the Kullback-Leibler (KL) divergence to the true data generating model.  To achieve robustness in Bayesian inference, existing works often target robust divergences, including maximum mean discrepancy, R\'{e}nyi's alpha-divergences, Hellinger-based divergences, and density power divergence \cite{cherief2019mmd,knoblauch2019generalized,bissiri2016general,jewson2018principles,ghosh2016robust}. Learning the learning rate in the general Bayesian inference framework is also gaining more recent attention \cite{holmes2017assigning,knoblauch2019robust}. Besides, \citet{miller2019robust} use approximate Bayesian computation to obtain a 'coarsened' posterior to achieve robustness and \citet{grunwald2012safe} proposes a safe Bayesian method.
 
Despite being an active research field, alleviating the impact of the model uncertainty in the class-conditional distribution (i.e., the likelihood conditional on the class) using ideas from distributional robustness is left largely unexplored even though this uncertainty arises naturally for numerous reasons. Even if we assume a proper parametric family, the plug-in estimator still carries statistical error from finite sampling and rarely matches the true distribution. The uncertainty is amplified when one relaxes to the nonparametric setting where no hardwired likelihood specification remains valid, and we are not aware of any guidance on a reasonable choice of a likelihood in this case. The situation deteriorates further when the training data violates the independent or identically distributed assumptions, or when the test distribution differs from the training distribution as in the setting of covariate shift~\cite{ref:gretton2009covariate, ref:bicket2009discriminative, ref:moreno2012unifying}.

We endeavor in this paper to provide the precise mathematical model for binary classification with uncertain likelihood under the Bayesian decision analysis framework. Consider a binary classification setting where $Y \in \{0, 1\}$ represents the random class label and $X \in \R^d$ represents the random features. With a new observation $x$ to be classified, we consider the problem of finding an optimal action $a \in \{0, 1\}$,
\[
    a = \begin{cases}
    0 & \text{if classify $x$ in class 0}, \\
    1 & \text{if classify $x$ in class 1},
    \end{cases}
\]
to minimize the probability of  misclassification by solving the optimization problem
\[
    \Min{a \in \{0, 1\}}~a \PP( Y = 0 | X = x) + (1-a) \PP( Y = 1 | X = x),
\]
where $\PP(Y | X = x)$ denotes the posterior probability. If a class-proportion prior $\pi$ and the class-conditional (parametric) densities $f_0$ and $f_1$ are known, then this posterior probability can be calculated by using the Bayes' theorem~\citep[Theorem~1.31]{ref:schervish1995theory}.
Unluckily, we rarely have access to the true conditional densities in real life. 

To tackle this problem in the data-driven setting, for any class $c \in \{0, 1\}$, the decision maker first forms, to the best of its belief and on the availability of data, a nominal class-conditional distribution $\Pnom_c$. We assume now that the true class-conditional distribution belongs to an ambiguity set $\mbb B_{\rho_c}(\Pnom_c)$, defined as a ball, prescribed via an appropriate measure of dissimilarity, of radius $\rho_c \ge 0$ centered at the nominal distribution $\Pnom_c$ in the space of class-conditional probability measures. Besides, we allow to constrain the class-conditional distributions to lie in a subspace $\mc P$ of probability measures to facilitate the injection of optional parametric information, should the need arise.

To avoid any unnecessary measure theoretic complications, we position ourselves temporarily in the parametric setting and assume that we can generically write $\mbb B_{\rho_c}(\Pnom_c) \cap \mc P$ parametrically as
\[
    \mbb B_{\rho_c}(\Pnom_c) \cap \mc P = \left\{ f_c(\cdot | \theta_c) : \theta_c \in \Theta_c \right\} \quad \forall c \in \{0, 1\},
\]
where $\Theta_c$ are non-empty (sub)sets on the finite-dimensional parameter space $\Theta$, and $\Theta_c$ satisfy the additional regularity condition that the density evaluated at point $x$ is strictly positive, i.e., $f_c(x | \theta_c) > 0$ for all $\theta_c \in \Theta_c$. Notice that the parametric subspace of probability distributions $\mc P$ is now explicitly described through the set of admissible parameters $\Theta$. If we denote the prior proportions by $\pi_0 = \pi(Y=0) > 0$ and $\pi_1 = \pi(Y=1) > 0$, then the ambiguity set over the posterior distributions induced by the class-conditional ambiguity sets $\mbb B_0$ and $\mbb B_1$ can be written as
\begin{align*}
    \mc B\!
    =\!\left\{ \PP : \begin{array}{l}
        \exists f_0 \in \mbb B_{\rho_0}(\Pnom_0) \cap \mc P, f_1 \in \mbb B_{\rho_1}(\Pnom_1) \cap \mc P :\\
        \PP(Y = c | X = x) = \ds\frac{f_c(x) \pi_c}{ \ds\sum_{c' \in \{0, 1\}} f_{c'}(x) \pi_{c'} }~ \forall c\!\!
        \end{array}
    \right\},
\end{align*}
where the constraint in the set $\mc B$ links the class-conditional densities $f_c$ and the prior distribution of the class proportions $\pi$ to the posterior distribution. Facing with the uncertainty in the posterior distributions, it is reasonable to consider now the distributionally robust problem
\begin{align}
    \Min{a \in \{0, 1\}}\Sup{\PP \in \mc B}~a \PP( Y\!=\!0 | X\!=\!x) + (1-a) \PP( Y\!=\!1 | X\!=\!x), \label{eq:bayes-dro}
\end{align}
where the action $a$ is chosen so as to minimize the worst-case mis-classification probability over all posterior distribution $\PP \in \mc B$. The next proposition asserts that the optimal action $a\opt$ belongs to the class of ratio decision rule.
\begin{proposition}[Optimal action]
    \label{prop:optimal-action}
    The optimal action that minimizes the worst-case mis-classification probability~\eqref{eq:bayes-dro} has the form
    \[
        a\opt = \begin{cases}
            1 & \text{if } ~~\ds \frac{\sup_{f_1 \in \mbb B_{\rho_1}(\Pnom_1) \cap \mc P} f_1(x)}{\sup_{f_0 \in \mbb B_{\rho_0}(\Pnom_0) \cap \mc P} f_0(x)} \ge \tau(x), \\
            0 & \text{otherwise,}
        \end{cases}
    \]
    for some threshold $\tau > 0$ that is dependent on $x$.
\end{proposition}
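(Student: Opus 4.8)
The plan is to first reduce~\eqref{eq:bayes-dro} to a comparison of two numbers, then evaluate each number in closed form by exploiting the product structure of the ambiguity set $\mc B$, and finally rearrange the comparison into the stated ratio test. Since the outer variable $a$ ranges over the two-point set $\{0,1\}$, the inner problem has value $V_1 \Let \Sup{\PP\in\mc B}\PP(Y=0|X=x)$ when $a=1$ and $V_0 \Let \Sup{\PP\in\mc B}\PP(Y=1|X=x)$ when $a=0$; these are the worst-case misclassification probabilities of the two actions, so $a\opt=1$ exactly when $V_1 \le V_0$ (breaking ties in favour of class~$1$).

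Next I would compute $V_1$ and $V_0$. A feasible $\PP\in\mc B$ is obtained by choosing $f_0\in\mbb B_{\rho_0}(\Pnom_0)\cap\mc P$ and $f_1\in\mbb B_{\rho_1}(\Pnom_1)\cap\mc P$ \emph{independently}, and $\PP(Y=0|X=x)=g\big(f_0(x),f_1(x)\big)$ depends on that choice only through the two positive scalars $f_0(x)$ and $f_1(x)$, where $g(u,v)=u\pi_0/(u\pi_0+v\pi_1)$. The map $g$ is continuous on the positive quadrant, nondecreasing in $u$ and nonincreasing in $v$; hence $g(f_0(x),f_1(x))\le g(\bar f_0,\ubar f_1)$ for every feasible pair, where $\bar f_c\Let\sup_{f_c\in\mbb B_{\rho_c}(\Pnom_c)\cap\mc P}f_c(x)$ and $\ubar f_c$ denotes the corresponding infimum (both assumed finite). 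Choosing $f_0$ and $f_1$ along sequences with $f_0(x)\to\bar f_0$ and $f_1(x)\to\ubar f_1$ makes the bound tight by continuity of $g$, so $V_1=\bar f_0\pi_0/(\bar f_0\pi_0+\ubar f_1\pi_1)$ and, symmetrically, $V_0=\bar f_1\pi_1/(\bar f_1\pi_1+\ubar f_0\pi_0)$.

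Finally I would turn the inequality $V_1\le V_0$ into the ratio form. Both denominators are strictly positive because the densities are strictly positive on $\Theta_c$, so cross-multiplying and cancelling the common term $\bar f_0\bar f_1\pi_0\pi_1$ reduces $V_1\le V_0$ to $\bar f_0\ubar f_0\pi_0^2\le\bar f_1\ubar f_1\pi_1^2$, i.e.
\[
\frac{\sup_{f_1\in\mbb B_{\rho_1}(\Pnom_1)\cap\mc P}f_1(x)}{\sup_{f_0\in\mbb B_{\rho_0}(\Pnom_0)\cap\mc P}f_0(x)}\;=\;\frac{\bar f_1}{\bar f_0}\;\ge\;\tau(x),\qquad \tau(x)\Let\frac{\pi_0^2\,\ubar f_0}{\pi_1^2\,\ubar f_1},
\]
which is exactly the asserted decision rule; the threshold $\tau(x)$ is strictly positive (since $\ubar f_0>0$) and depends on $x$ alone, through $\ubar f_0$, $\ubar f_1$ and the fixed priors.

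The only genuinely delicate point is the decoupling in the second step: arguing rigorously that maximizing the coupled posterior $\PP(Y=c|X=x)$ over $\PP\in\mc B$ amounts to choosing $f_0(x)$ and $f_1(x)$ independently at their optimal extremes. This is where the two structural features of the model must be used together — that $\mc B$ is a product across the two classes and that $g$ is monotone in each argument — and one must also check the boundary behaviour (if some $\bar f_c=\infty$ the corresponding worst-case probability equals $1$, and if some $\ubar f_c$ vanishes the ratio test holds with the usual conventions), though such degeneracies do not affect the generic statement.
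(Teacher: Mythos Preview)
Your proposal is correct and follows essentially the same approach as the paper: you decouple the supremum over $\mc B$ into independent optimizations over $f_0$ and $f_1$ via the monotonicity of $g(u,v)=u\pi_0/(u\pi_0+v\pi_1)$, obtain $V_1=\bar f_0\pi_0/(\bar f_0\pi_0+\ubar f_1\pi_1)$ and $V_0=\bar f_1\pi_1/(\bar f_1\pi_1+\ubar f_0\pi_0)$, and then cross-multiply to arrive at exactly the paper's threshold $\tau(x)=\pi_0^2\ubar f_0/(\pi_1^2\ubar f_1)$. The paper presents the decoupling as nested suprema rather than a monotonicity argument, but the content is identical.
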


Motivated by this insight from the parametric setting, we now promote the following 
classification decision rule
\begin{equation*}
\mc C (x) = \begin{cases}
1 &\text{if } \mc R(x) \ge \tau(x),\\
0 &\text{otherwise,}
\end{cases}
\end{equation*}
where $\mc R(x)$ is the ratio defined as
\[
    \mc R(x) \Let \ds \frac{\Sup{\QQ \in \mbb
B_{\rho_1}(\Pnom_1) \cap \mc P} \ell(x, \QQ)}{ \Sup{\QQ \in \mbb
B_{\rho_0}(\Pnom_0) \cap \mc P} \ell(x, \QQ)} \,,
\]
and $\tau(x) > 0$ is a positive threshold which is potentially dependent on the observation $x$. The \textit{score function} $\ell(x, \QQ)$ quantifies the plausibility of observing $x$ under the probability measure $\QQ$, and the value $\mc R(x)$ quantifies how plausible an observation $x$ can be generated by \textit{any} class-conditional probability distribution in $\mbb B_{\rho_1}(\Pnom_1) \cap \mc P$ relatively to \textit{any} distribution in $\mbb B_{\rho_0}(\Pnom_0) \cap \mc P$. Because both the numerator and the denominator search for the distribution in the respective ambiguity set that maximizes the score of observing $x$, $\mc R(x)$ is thus termed the \textit{ratio of optimistic scores}, and the classification decision $\mc C$ is hence called the \textit{optimistic score ratio classifier}.


The classifying decision $\mc C(x)$ necessitates the solution of two \textit{optimistic score evaluation problem}s of the form
\be \label{eq:prob}
\Sup{\QQ \in \mbb B_\rho(\Pnom) \cap \mc P}~\ell(x, \QQ),
\ee
where the dependence of the input parameters on the label $c \in \{0, 1\}$ has been omitted to avoid clutter. The performance of $\mc C$ depends critically on the specific choice of $\ell$ and $\mbb B_\rho(\Pnom)$. Typically, $\ell$ is subjectively tailored to the choice of a parametric or a nonparametric view on the conditional distribution, as we shall see later on in this paper. The construction of $\mbb B_\rho(\Pnom)$ is principally governed by choice of the dissimilarity measure that specifies the $\rho$-neighborhood of the nominal distribution $\Pnom$. Ideally, $\mbb B_\rho(\Pnom)$ should allow a coherent transition between the parametric and nonparametric setting via its interaction with the set $\mc P$. Furthermore, it should render problem~\eqref{eq:prob} computationally tractable with meaningful optimal value, and at the same time provide the flexibility to balance between exerting statistical guarantees and modelling domain adaptation. These stringent criteria precludes the utilization of popular dissimilarity measures in the emerging literature. Indeed, the likelihood problem using the $f$-divergence \cite{ref:bental2013robust, ref:namkoong2016stochstic} delivers unreasonable estimate in the nonparametric setting~\citep[Section~2]{ref:nguyen2019calculating}, the Wasserstein distance~\cite{ref:esfahani2018data,ref:kuhn2019wasserstein, ref:blanchet2019robust, ref:gao2016distributionally, ref:zhao2018data} typically renders the Gaussian parametric likelihood problem non-convex, and the maximum mean discrepancy~\cite{ref:iyer2014maximum, ref:staib2019distributionally} usually results in an infinite-dimensional optimization problem which is challenging to solve. This fact prompts us to explore an alternative construction of $\mbb B_\rho(\Pnom)$ that meets the criteria as mentioned above.

The contributions of this paper are summarized as follows.
\begin{itemize}[wide=0pt]
    \item We introduce a novel ambiguity set based on a divergence defined on the space of mean vector and covariance matrix. We show that this divergence manifests numerous favorable properties and evaluating the optimistic score is equivalent to solving a non-convex optimization problem. We prove the asymptotic statistical guarantee of the divergence, which directs an optimal calibration the size of the ambiguity set.
    \item We show that, despite its inherent non-convexity and hence intractability, the optimistic score evaluation problem can be efficiently solved in both nonparametric and parametric Gaussian settings. We reveal that the optimistic score ratio classifier generalizes the Mahalanobis distance classifier and the linear/quadratic discriminant analysis.
\end{itemize}

Because evaluating the plausibility of an observation $x$ is a fundamental problem in statistics, the results of this paper have far-reaching implications beyond the scope of the classification task. These include Bayesian inference using synthetic likelihood~\cite{ref:wood2010statistical, ref:price2018bayes}, approximate Bayesian computation~\cite{ref:csillery2010practice, ref:toni2009abc}, variational Bayes inference~\cite{ref:blei2017variational, ref:ong2018variational}, and composite hypothesis testing using likelihood ratio~\cite{ref:cox1961tests, ref:cox2013return}. These connections will be explored in future research.

All proofs are relegated to the appendix.

\textbf{Notations.} We let $\mc M$ be the set of probability measures supported on $\R^d$ with finite second moment. The set of (symmetric) positive definite
matrices is denoted by $\PD^d$. For any $\QQ \in \mc M$, $\m \in \R^d$
and $\cov \in \PD^d$, we use $\QQ \sim (\m, \cov)$ to express that $\QQ$ has
mean vector $\m$ and covariance matrix $\cov$. 
The $d$-dimensional identity matrix is denoted by $I_d$.
The space of Gaussian distributions is denoted by $\mc N$, and $\mc N(\m, \cov)$ denotes a Gaussian distribution with mean $\m$ and covariance matrix $\cov$. The trace and determinant operator are denoted by $\Tr{A}$ and $\det(A)$, respectively.

\section{Moment-based Divergence Ambiguity Set}
\label{sect:ambiguity}

We specifically study the construction of the ambiguity set using the following divergence on the space of moments.
\begin{definition}[Moment-based divergence] \label{def:divergence}
    For any vectors $\m_1$, $\m_2 \in \R^d$ and matrices $\cov_1$, $\cov_2 \in \PD^d$, the divergence from the tuple $(\m_1, \cov_1)$ to the tuple $(\m_2, \cov_2)$ amounts to
    \begin{align*}
        &\mathds D \big( (\m_1, \cov_1) \parallel (\m_2, \cov_2) \big) \Let (\m_2 - \m_1)^\top\cov_2^{-1} (\m_2 - \m_1) \\
        &\qquad +\Tr{\cov_1 \cov_2^{-1}} - \log\det (\cov_1 \cov_2^{-1}) - d.
    \end{align*}
\end{definition}
To avoid any confusion, it is worthy to note that contrary to the usual utilization of the term `divergence' to specify a dissimilarity measure on the probability space, in this paper, the divergence is defined on the finite-dimensional space of mean vectors and covariance matrices. 

It is straightforward to show that $\mathds D$ is a divergence on $\R^d \times \PD^d$ by noticing that $\mathds D$ is a sum of the log-determinant divergence~\cite{ref:chebbi2012means} from $\cov_1$ to $\cov_2$ and a non-negative Mahalanobis distance between $\m_1$ and $\m_2$ weighed by $\cov_2$. As a consequence, $\mathds D$ is non-negative, and perishes to 0 if and only if $\cov_1 = \cov_2$ and $\m_1 = \m_2$. With this property, $\mathds D$ is an attractive candidate for the divergence on the joint space of mean vector and covariance matrix of $d$-dimensional random vectors. One can additionally verify that $\mathds D$ is  affine-invariant in the following sense. Let $\xi$ be a $d$-dimensional random vector and $\zeta$ be the affine-transformation of $\xi$, that is, $\zeta = A\xi+b$ for an invertible matrix $A$ and a vector $b$ of matching dimensions, then the value of the divergence $\mathds D$ is preserved between the space of moments of $\xi$ and $\zeta$. In fact, if $\xi$ is a random vector with mean vector $\m_j \in \R^d$ and covariance matrix $\cov_j \in \PD^d$, then $\zeta$ has mean $A\m_j+b$ and covariance matrix $A\cov_j A^\top$ for $j \in \{1, 2\}$, and we have
\begin{align}
    &\mathds D\big((\m_1, \cov_1) \parallel (\m_2, \cov_2) \big) \label{eq:linear_transform} \\
     &= \mathds D\big((A\m_1+b, A\cov_1 A^\top) \parallel (A\m_2+b, A\cov_2 A^\top) \big).  \notag
\end{align}
A direct consequence is that $\mathds D$ is also scale-invariant.
Furthermore, the divergence $\mathds D$ is closely related to the KL divergence\footnote{If $\QQ_1$ is absolutely continuous with respect to $\QQ_2$, then the Kullback-Leibler divergence from $\QQ_1$ to $\QQ_2$ amounts to $\KL(\QQ_1 \parallel \QQ_2) \Let \EE_{\QQ_1} [\log \mathrm{d}\QQ_1 /\mathrm{d} \QQ_2]$,
where $\mathrm{d}\QQ_1 /\mathrm{d} \QQ_2$ is the Radon-Nikodym derivative of $\QQ_1$ with respect to $\QQ_2$.}, or the relative entropy, between two non-degenerate Gaussian distributions as
\[
    \mathds D\big((\m_1, \cov_1)\!\parallel\!(\m_2, \cov_2) \big)\!=\!2\KL\big( \mc N(\m_1, \cov_1)\!\parallel\!\mc N(\m_2, \cov_2) \big).
\]
However, we emphasize that $\mathds D$ is not symmetric, and in general $\mathds D\big((\m_1, \cov_1) \parallel (\m_2, \cov_2) \big) \neq \mathds D\big((\m_2, \cov_2) \parallel (\m_1, \cov_1) \big)$. Hence, $\mathds D$ is not a distance on $\R^d \times \PD^d$.

For any vector $\msa \in \R^d$, invertible matrix $\covsa \in \PD^d$ and radius $\rho \in \R_+$, we define the uncertainty set $\mc U_\rho(\msa, \covsa)$ over the mean vector and covariance matrix space as
\be \label{eq:U-def}
\begin{aligned}
&\mc U_{\rho}({\msa}, \covsa) \Let \\
&\quad \{ (\m, \cov) \in \R^d \times \PD^d: \mathds D \big( (\msa, \covsa)
\parallel (\m, \cov) \big) \leq \rho \}.
\end{aligned}
\ee
By definition, $\mc U_\rho(\msa, \covsa)$ includes all tuples $(\m, \cov)$ which is of a divergence not bigger than $\rho$ from the tuple $(\msa, \covsa)$. Because $\mathds D$ is not symmetric, it is important to note that $\mc U_\rho(\msa, \covsa)$ is defined with the tuple $(\msa, \covsa)$ being the first argument of the divergence $\mathds D$, and this uncertainty set can be written in a more expressive form as
\begin{align*}
&\mc U_{\rho}({\msa}, \covsa) = \\
&\left\{ \begin{array}{l}
\!\!(\m, \cov) \in \R^d \times \PD^d: \\
\!\!(\m - \msa)^\top\cov^{-1} (\m - \msa) +\Tr{\covsa \cov^{-1}} + \log\det \cov \leq \overline \rho \!\!
\end{array}
\right\}
\end{align*}
for a scalar $\overline \rho \Let \rho + d + \log\det\covsa$. Moreover, one can assert that $\mc U_\rho(\msa, \covsa)$ is non-convex due to the log-determinant term, and this non-convexity cannot be eliminated using the reparametrization to the space of inverse covariance matrices (or equivalently called the precision matrices).

Equipped with $\mc U_\rho(\msa, \covsa)$, the ambiguity set $\mbb B_\rho(\Pnom)$ is systematically constructed as follows. If the nominal distribution $\Pnom$ admits a nominal mean vector $\msa$ and a nominal nondegenerate covariance matrix $\covsa$, then $\mbb B_\rho(\Pnom)$ is a ball that contains all probability measures whose mean vector and covariance matrix are contained in $\mc U_\rho(\msa, \covsa)$, that is,
\be \label{eq:B-def}
\begin{aligned}
\mbb B_\rho(\Pnom)\!\Let\! \{ \QQ \in \M\!:\!\QQ \sim (\m, \cov), (\m, \cov)
\in \mc U_\rho(\msa, \covsa) \}.
\end{aligned}
\ee
The set $\mbb B_\rho(\Pnom)$, by construction, differentiates only through the information about the first two moments: if a distribution $\QQ$ belongs to $\mbb B_\rho(\Pnom)$, then \textit{any} distribution $\QQ'$ with the same mean vector and covariance matrix with $\QQ$ also belongs to $\mbb B_\rho(\Pnom)$. Further, $\mbb B_\rho(\Pnom)$ embraces all types of probability distributions, including discrete, continuous and even mixed continuous/discrete distributions.

We now delineate a principled approach to solve the optimistic score evaluation problem~\eqref{eq:prob} for a generic score function $\ell: \R^d \times \M \to \R$.
We denote by $\mc M(\m, \cov)$ the Chebyshev ambiguity set that contains
all probability measures with \textit{fixed} mean vector $\m \in \R^d$ and \textit{fixed} covariance
matrix $\cov \in \PD^d$, that is,
\begin{equation*}
\mc M(\m, \cov) \Let\left\{ \QQ \in \mc M: \QQ \sim (\m, \cov) \right\}.
\end{equation*}
The moment-based divergence ambiguity set $\mbb B_\rho(\Pnom)$ then admits an equivalent representation
\begin{equation*}
\mbb B_\rho(\Pnom) = \bigcup_{(\m, \cov) \in \mc U_\rho(\msa, \covsa)} \mc M(\m, \cov),
\end{equation*}
which is an infinite union of Chebyshev ambiguity sets, where the union operator is taken over all tuples of mean vector-covariance matrix belonging to $\mc U_\rho(\msa, \covsa)$. Leveraging on this representation, problem~\eqref{eq:prob} can now be decomposed as a two-layer optimization problem
\be \label{eq:two-layer} \Sup{\QQ \in \mbb B_\rho(\Pnom)}%
~\ell(x, \QQ) = \Sup{(\m, \cov) \in \mc U_\rho(\msa, \covsa)}~%
\Sup{\QQ \in
\mc M(\m, \cov) \cap \mc P}~\ell(x, \QQ).
\ee
The inner subproblem of~\eqref{eq:two-layer} is a distributionally robust optimization problem with a Chebyshev second moment ambiguity set, hence there is a strong potential to exploit existent results from the literature, see \citet{ref:delage2010distributionally} and \citet{ref:wiesemann2014distributionally}, to reformulate this inner problem into a finite dimensional convex optimization problem. Unfortunately, the outer subproblem of~\eqref{eq:two-layer} is a robust optimization problem over a non-convex uncertainty set $\mc U_\rho(\msa, \covsa)$, thus the two-layer decomposition problem~\eqref{eq:two-layer} remains computationally intractable in general. As a direct consequence, solving the optimistic score evaluation problem requires an intricate adaptation of non-convex optimization techniques applied on a case-by-case basis. Two exemplary settings in which problem~\eqref{eq:two-layer} can be efficiently solved will be depicted subsequently in Sections~\ref{sect:nonparam} and~\ref{sect:param}.

We complete this section by providing the asymptotic statistical guarantees of the divergence $\mathds D$, which serves as a potential guideline for the construction of the ambiguity set $\mbb B_\rho(\Pnom)$ and the tuning of the radius parameter $\rho$.

\begin{theorem}[Asymptotic guarantee of $\mathds D$] \label{thm:asymptotic}
    Suppose that a $d$-dimensional random vector $\xi$ has mean vector $m \in \R^d$, covariance matrix $S \in \PD^d$ and admits finite fourth moment under a probability measure $\PP$. Let $\wh \xi_t \in \R^d$, $t=1, \ldots, n$ be independent and identically distributed samples of $\xi$ from $\PP$. Denote by $\msa_n \in \R^d$ and $\covsa_n \in \PSD^d$ the sample mean vector and sample covariance matrix defined as
    \be \label{eq:sample-average}
        \msa_n = \frac{1}{n} \sum_{t=1}^n \wh \xi_t, \quad \covsa_n = \frac{1}{n} \sum_{t=1}^n (\wh \xi_t - \msa_n)(\wh \xi_t - \msa_n)^\top.
    \ee
    Let $\eta = S^{-\half}(\xi - m)$ be the isotropic transformation of the random vector $\xi$, let $H$ be a $d$-dimensional Gaussian random vector with mean vector 0 and covariance matrix $I_d$, and let $Z$ be a $d$-by-$d$ random symmetric matrix with the upper triangle component $Z_{jk}$ ($j\leq k$) following a Gaussian distribution with mean 0 and the covariance coefficient between $Z_{jk}$ and $Z_{j^{\prime }k^{\prime }}$ is
    \[
        \mathrm{cov}(Z_{jk},Z_{j' k'})=\EE_{\PP}[\eta_j \eta_k \eta_{j'} \eta_{k'}] -\EE_{\PP}[\eta_j \eta_k]\,\EE_{\PP}[\eta_{j'}\eta_{k'}].
    \]
    Furthermore, $H$ and $Z$ are jointly Gaussian distributed with the covariance between $H_i$ and $Z_{jk}$ as
\[
\mathrm{cov}(H_i,Z_{jk}) = \EE_{\PP}[\eta_i \eta_j \eta_k].
\]
As $n \uparrow \infty$, we have
    \begin{align*}
        &n \times \mathds D \big( (\msa_n, \covsa_n) \parallel (m, S) \big) \\
        & \qquad  \longrightarrow H^{\top} H + \frac{1}{2} \Tr{Z^2} \quad \text{in distribution.}
    \end{align*}
\end{theorem}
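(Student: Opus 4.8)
The plan is to combine the affine invariance of $\mathds D$ with a second-order delta method. First I would use the transformation identity~\eqref{eq:linear_transform} with $A = S^{-\half}$ and $b = -S^{-\half} m$ to rewrite $\mathds D((\msa_n, \covsa_n) \parallel (m, S))$ as $\mathds D((\wh\nu_n, \wh T_n) \parallel (0, I_d))$, where $\wh\nu_n$ and $\wh T_n$ are the sample mean and sample covariance of the isotropically transformed i.i.d.\ samples $\wh\eta_t = S^{-\half}(\wh\xi_t - m)$; since $\eta$ has mean $0$, covariance $I_d$ and a finite fourth moment, this reduces the problem to showing $n\,\mathds D((\wh\nu_n, \wh T_n)\parallel(0, I_d)) \to H^\top H + \half \Tr{Z^2}$ in distribution. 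Writing out Definition~\ref{def:divergence}, the quantity to analyze is $\wh\nu_n^\top \wh\nu_n + \Tr{\wh T_n} - \log\det \wh T_n - d$.

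Next I would establish a joint central limit theorem for $(\wh\nu_n, \wh T_n)$. Decomposing $\wh T_n = \tfrac1n \sum_t \wh\eta_t \wh\eta_t^\top - \wh\nu_n \wh\nu_n^\top$ and applying the multivariate CLT to the i.i.d.\ vectors obtained by stacking $\wh\eta_t$ and the upper triangle of $\wh\eta_t \wh\eta_t^\top$ (which have finite covariance thanks to the fourth-moment assumption), one gets $\sqrt n\,\wh\nu_n \to H$ and $\sqrt n (\tfrac1n\sum_t \wh\eta_t\wh\eta_t^\top - I_d) \to Z$ jointly, with exactly the covariance structure stated in the theorem (the cross-covariances being third moments of $\eta$, the $Z$-block being the fourth-moment terms, and the $H$-block being $I_d$ because $\EE_\PP[\eta_i \eta_j] = \delta_{ij}$). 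The correction term satisfies $\sqrt n\,\wh\nu_n\wh\nu_n^\top = O_\PP(n^{-1/2})$, so by Slutsky's theorem the pair $(\sqrt n\,\wh\nu_n,\ \sqrt n(\wh T_n - I_d))$ has the same limit $(H, Z)$; the strong law also guarantees $\wh T_n \in \PD^d$ for all large $n$ almost surely, so the logarithm is eventually well defined.

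The heart of the argument is a second-order Taylor expansion of $(\m, \cov) \mapsto \m^\top\m + \Tr{\cov} - \log\det\cov - d$ around $(0, I_d)$. The subtle point is that this function vanishes at $(0, I_d)$ \emph{and so does its gradient} ($\nabla_\m (\m^\top\m) = 0$ at $\m = 0$, and $\nabla_\cov(\Tr{\cov} - \log\det\cov) = I_d - \cov^{-1} = 0$ at $\cov = I_d$), which is why a first-order delta method is useless here and one must go to second order. Writing $\Delta_n = \wh T_n - I_d$ and using the eigenvalue expansion $\log\det(I_d + \Delta) = \Tr{\Delta} - \half\Tr{\Delta^2} + R(\Delta)$ with a uniform cubic remainder bound $|R(\Delta)| \le C\|\Delta\|^3$ valid for symmetric $\Delta$ of operator norm at most $\half$, I would obtain, on the high-probability event $\{\|\Delta_n\| \le \half\}$,
\[
    \mathds D\big((\wh\nu_n, \wh T_n) \parallel (0, I_d)\big) = \wh\nu_n^\top \wh\nu_n + \half \Tr{\Delta_n^2} - R(\Delta_n).
\]
Multiplying by $n$ turns the first two terms into $\|\sqrt n\,\wh\nu_n\|^2 + \half\Tr{(\sqrt n\,\Delta_n)^2}$, which converge jointly to $H^\top H + \half\Tr{Z^2}$ by the joint CLT and the continuous mapping theorem, while $n\,|R(\Delta_n)| \le C n \|\Delta_n\|^3 = O_\PP(n^{-1/2}) = o_\PP(1)$ because $\|\Delta_n\| = O_\PP(n^{-1/2})$; a final application of Slutsky's theorem yields the claimed convergence.

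I expect the main obstacle to be the rigorous handling of the remainder: one needs a Taylor bound on $\log\det$ that is \emph{uniform} over a neighborhood of $I_d$ so that the random argument $\wh T_n$ can be substituted, together with a truncation to the event $\{\|\wh T_n - I_d\| \le \half\}$ (whose probability tends to $1$) so that almost-sure positive-definiteness and the cubic bound are simultaneously in force. Everything else is fairly standard CLT-plus-delta-method bookkeeping, the one genuinely non-routine feature being the vanishing gradient that forces the expansion to second order.
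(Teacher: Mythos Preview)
Your proposal is correct and follows essentially the same strategy as the paper: reduce via the isotropic transformation $\wh\eta_t = S^{-\half}(\wh\xi_t - m)$, invoke the joint CLT $\sqrt n(\bar\mu_n,\, n^{-1}\sum_t \wh\eta_t\wh\eta_t^\top - I_d)\to (H,Z)$, and then use a second-order Taylor expansion of $\log\det$ about $I_d$ together with Slutsky.

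The only noteworthy organizational difference is the decomposition. You appeal directly to the affine invariance~\eqref{eq:linear_transform} to write $\mathds D = \|\wh\nu_n\|^2 + \big(\Tr{\wh T_n} - \log\det\wh T_n - d\big)$ and handle the two summands separately, absorbing the rank-one correction $\wh\nu_n\wh\nu_n^\top$ into $\wh T_n$ via Slutsky before the Taylor step. The paper instead first combines the mean and trace terms into $\Tr{M_n}$ with $M_n = n^{-1}\sum_t \wh\eta_t\wh\eta_t^\top$, then adds and subtracts $\log\det M_n$ to split the divergence as $(\log\det M_n - \log\det \wh T_n) + (\Tr{M_n} - d - \log\det M_n)$; the first piece is shown to converge to $H^\top H$ via a first-order delta method applied to $-\log\det(I_d - M_n^{-1}\bar\mu_n\bar\mu_n^\top)$, and the second piece gives $\half\Tr{Z^2}$ by the same second-order expansion you use. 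Your route avoids the add-and-subtract trick and the extra log-det term, at the small cost of needing the Slutsky argument for $\sqrt n(\wh T_n - I_d)\to Z$ rather than for $M_n$ directly. Both arguments are equivalent in substance.
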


We were not able to locate Theorem~\ref{thm:asymptotic} in the existing literature. Interestingly, Theorem~\ref{thm:asymptotic} also sheds light upon the asymptotic behavior of the KL divergence from an empirical Gaussian distribution $\mc N(\msa_n, \covsa_n)$ to the data-generating Gaussian distribution $\mc N(m, S)$.
\begin{corollary}[Asymptotic guarantee of $\mathds D$ -- Gaussian distributions] \label{cor:gaussian}
    Suppose that $\wh \xi_t \in \R^d$, $t=1, \ldots, n$ are independent and identically distributed samples of $\xi$ from $\PP = \mc N(m,S)$ for some $m \in \R^d$ and $S \in \PD^d$. Let $\msa_n \in \R^d$ and $\covsa_n \in \PSD^d$ be the sample mean vector and covariance matrix defined as in~\eqref{eq:sample-average}. As $n \uparrow \infty$, we have
    \begin{align*}
        & n \times \KL \big( \mc N (\msa_n, \covsa_n) \parallel \mc N(m, S) \big) \\
        &\qquad  \longrightarrow \frac{1}{2}\chi^2\left(d(d+3)/2\right)  \quad \text{in distribution,}
    \end{align*}
    where $\chi^2\left(d(d+3)/2\right)$ is a chi-square distribution with  $d(d+3)/2$ degrees of freedom.
\end{corollary}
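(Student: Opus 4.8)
The plan is to obtain this as a direct specialization of Theorem~\ref{thm:asymptotic}. When $\PP = \mc N(m, S)$, the isotropic transformation $\eta = S^{-\half}(\xi - m)$ is distributed as $\mc N(0, I_d)$, and the identity recorded earlier, $\mathds D\big((\m_1, \cov_1) \parallel (\m_2, \cov_2)\big) = 2\,\KL\big(\mc N(\m_1, \cov_1) \parallel \mc N(\m_2, \cov_2)\big)$, gives
\[
    n \times \KL\big(\mc N(\msa_n, \covsa_n) \parallel \mc N(m, S)\big) = \frac{1}{2}\, n \times \mathds D\big((\msa_n, \covsa_n) \parallel (m, S)\big).
\]
By Theorem~\ref{thm:asymptotic} the right-hand side converges in distribution to $\frac12\big(H^\top H + \frac12 \Tr{Z^2}\big)$, so it remains only to identify the law of this limit when $\eta$ is standard Gaussian.

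First I would compute the relevant moments. Since $\eta \sim \mc N(0, I_d)$, Isserlis' (Wick's) theorem yields $\EE_\PP[\eta_j \eta_k \eta_{j'}\eta_{k'}] = \delta_{jk}\delta_{j'k'} + \delta_{jj'}\delta_{kk'} + \delta_{jk'}\delta_{kj'}$ and $\EE_\PP[\eta_j\eta_k] = \delta_{jk}$, whence
\[
    \mathrm{cov}(Z_{jk}, Z_{j'k'}) = \delta_{jj'}\delta_{kk'} + \delta_{jk'}\delta_{kj'}.
\]
In particular $\mathrm{var}(Z_{jj}) = 2$, $\mathrm{var}(Z_{jk}) = 1$ for $j < k$, and distinct entries of the upper triangle are uncorrelated; moreover $\mathrm{cov}(H_i, Z_{jk}) = \EE_\PP[\eta_i\eta_j\eta_k] = 0$ because it is an odd moment of a centered Gaussian. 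Hence $H$, the diagonal entries $\{Z_{jj}\}_j$, and the off-diagonal entries $\{Z_{jk}\}_{j<k}$ are mutually independent (being jointly Gaussian and pairwise uncorrelated), with $H \sim \mc N(0, I_d)$, $Z_{jj} \sim \mc N(0, 2)$, and $Z_{jk} \sim \mc N(0, 1)$.

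Then I would evaluate the limit explicitly. We have $H^\top H \sim \chi^2(d)$, and expanding the trace,
\[
    \Tr{Z^2} = \sum_{j} Z_{jj}^2 + 2\sum_{j < k} Z_{jk}^2,
\]
where $\sum_j Z_{jj}^2 = 2\sum_j (Z_{jj}/\sqrt 2)^2 \sim 2\,\chi^2(d)$ and $2\sum_{j<k}Z_{jk}^2 \sim 2\,\chi^2(d(d-1)/2)$, and these two sums are independent. Therefore $\frac12 \Tr{Z^2} \sim \chi^2(d) + \chi^2(d(d-1)/2) = \chi^2(d(d+1)/2)$, and since it is independent of $H^\top H$,
\[
    H^\top H + \frac12 \Tr{Z^2} \sim \chi^2\!\big(d + d(d+1)/2\big) = \chi^2\!\big(d(d+3)/2\big).
\]
Multiplying by $\frac12$ delivers the claimed limit $\frac12\,\chi^2(d(d+3)/2)$. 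I do not anticipate a genuine obstacle here; the only point requiring care is the bookkeeping in $\Tr{Z^2}$ — correctly accounting for the variance $2$ of the diagonal entries and the double-counting of off-diagonal entries in the trace — together with the observation that zero covariance among jointly Gaussian variables upgrades to full independence, which is precisely what lets the chi-square degrees of freedom simply add.
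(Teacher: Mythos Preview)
Your proposal is correct and follows essentially the same route as the paper: invoke Theorem~\ref{thm:asymptotic}, use $\mathds D = 2\,\KL$ to translate to the KL statement, then specialize the covariance structure of $(H,Z)$ to the standard Gaussian case to identify $H^\top H + \tfrac12\Tr{Z^2}$ as $\chi^2(d(d+3)/2)$. Your write-up is in fact more explicit than the paper's (invoking Isserlis' theorem, expanding $\Tr{Z^2}$, and spelling out why joint Gaussianity upgrades zero covariance to independence), but the underlying argument is the same.
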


If we use independent and identically distributed (i.i.d.) samples to estimate the nominal mean vector and covariance matrix of $\Pnom$, then the radius $\rho$ should be asymptotically scaled at the rate $n^{-1}$ as the sample size $n$ increases. Indeed, Theorem~\ref{thm:asymptotic} and Corollary~\ref{cor:gaussian} suggest that $n^{-1}$ is the optimal asymptotic rate which ensures that the true but unknown mean vector and covariance matrix of the data-generating distribution fall into the set $\mc U_\rho(\msa, \covsa)$ with high probability. While the limiting distribution under the Gaussian setting is a typical chi-square distribution, the general limiting distribution $H^\top H + \Tr{Z^2}/2$ in Theorem~\ref{thm:asymptotic} does not have any analytical form. This limiting distribution can be numerically approximated, for example, via Monte Carlo simulations. If the i.i.d.~assumption of the training samples is violated or if we expect a covariate shift at test time, then the radius $\rho$ reflects the modeler's belief regarding the moment mismatch measured using the divergence $\mathds D$ and in this case, the radius $\rho$ should be considered as an exogenous input to the problem.

For illustrative purpose, we fix dimension $d = 20$ and consider the random vector $\xi = C
\zeta + m$, where entries of $\zeta$ are mutually independent and the $i$-th entry follows a normalized chi-square distribution, i.e., $\zeta_i \sim
(\chi^2(1) - 1)/\sqrt{2} $. Then the covariance matrix of $\xi$ is $S=CC^{\top}$. Notice that by the identity \eqref{eq:linear_transform},   $\mathds D\big( (\msa_n, \covsa%
_n) \parallel (m, S) \big)$ is invariant of the choice of $C$ and $m$. We generate 10,000 datasets, each contains $n$ i.i.d.~samples of $\xi$ and calculate for each dataset the empirical values of $n\times \mathds D\big( (\msa_n, \covsa%
_n) \parallel (m, S) \big)$. We plot in Figure \ref{plot:clt} the empirical distribution of $n\times \mathds D\big( (\msa_n, \covsa%
_n) \parallel (m, S) \big)$ using 10,000 datasets versus the limiting distribution of $H^{\top} H +
\Tr{Z^2}/2$ for different values of $n$. One can observe that for a small sample size ($n < 100$), there is a perceivable
difference between the finite sample distribution and the limiting
distribution, but as $n$ becomes larger ($n > 100$), this mismatch is significantly reduced.

\begin{figure}[ptbh]
\centering
\subfigure[$n = 30$]{
 \includegraphics[width=0.45\columnwidth]{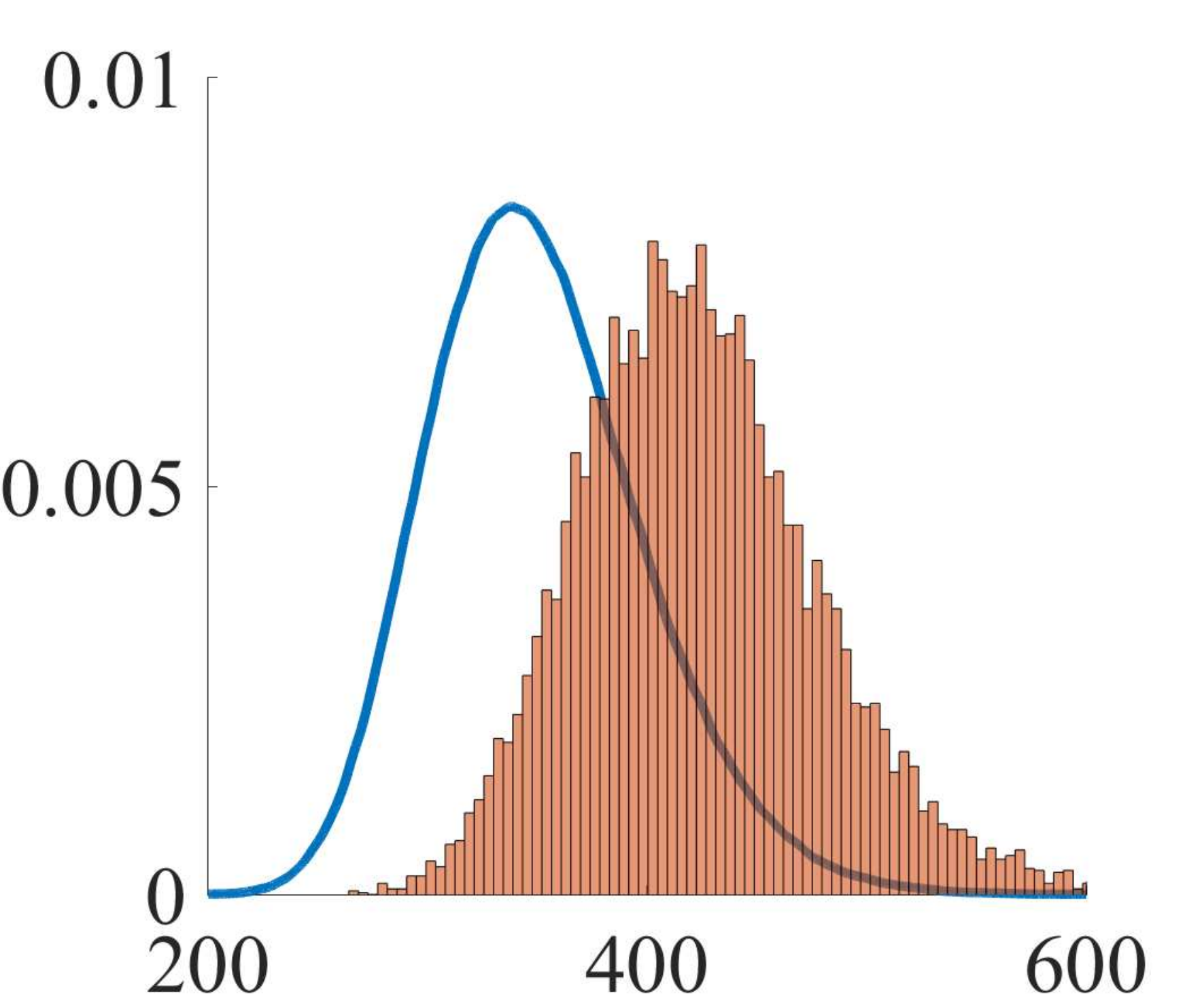}}
\subfigure[$n = 100$]{
 \includegraphics[width=0.45\columnwidth]{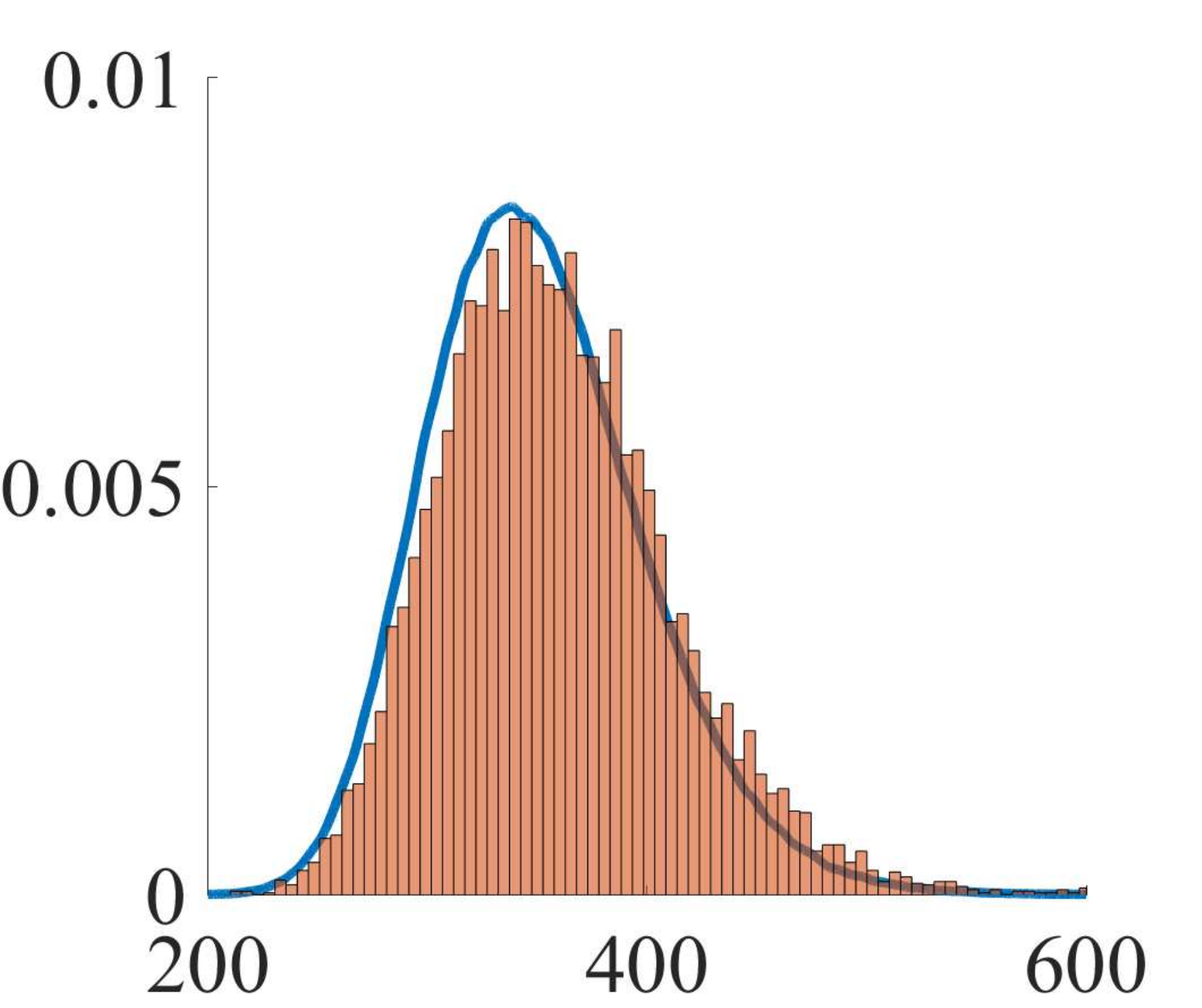}} \\
\subfigure[$n = 300$]{
 \includegraphics[width=0.45\columnwidth]{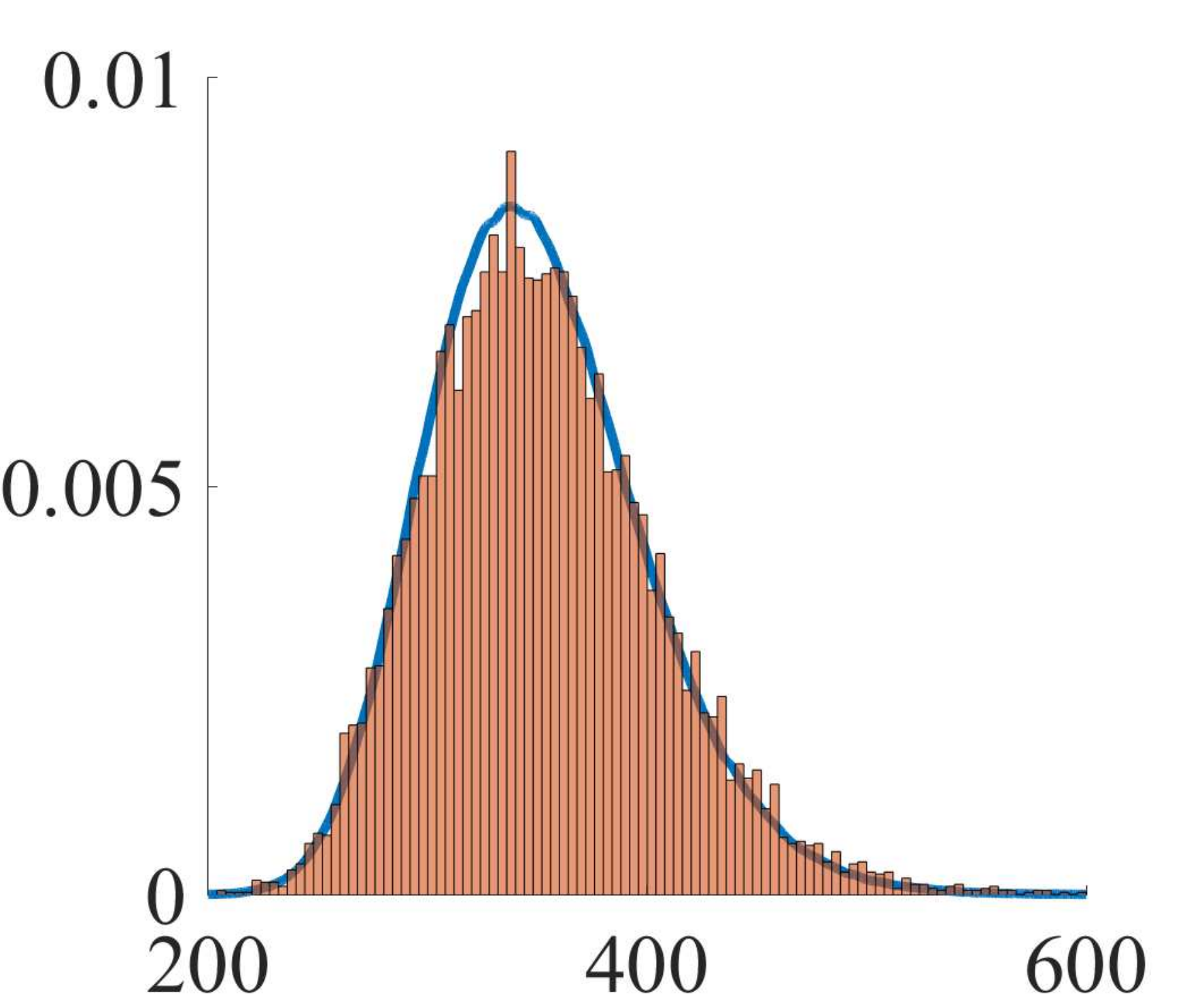}}
\subfigure[$ n = 1000$]{
 \includegraphics[width=0.45\columnwidth]{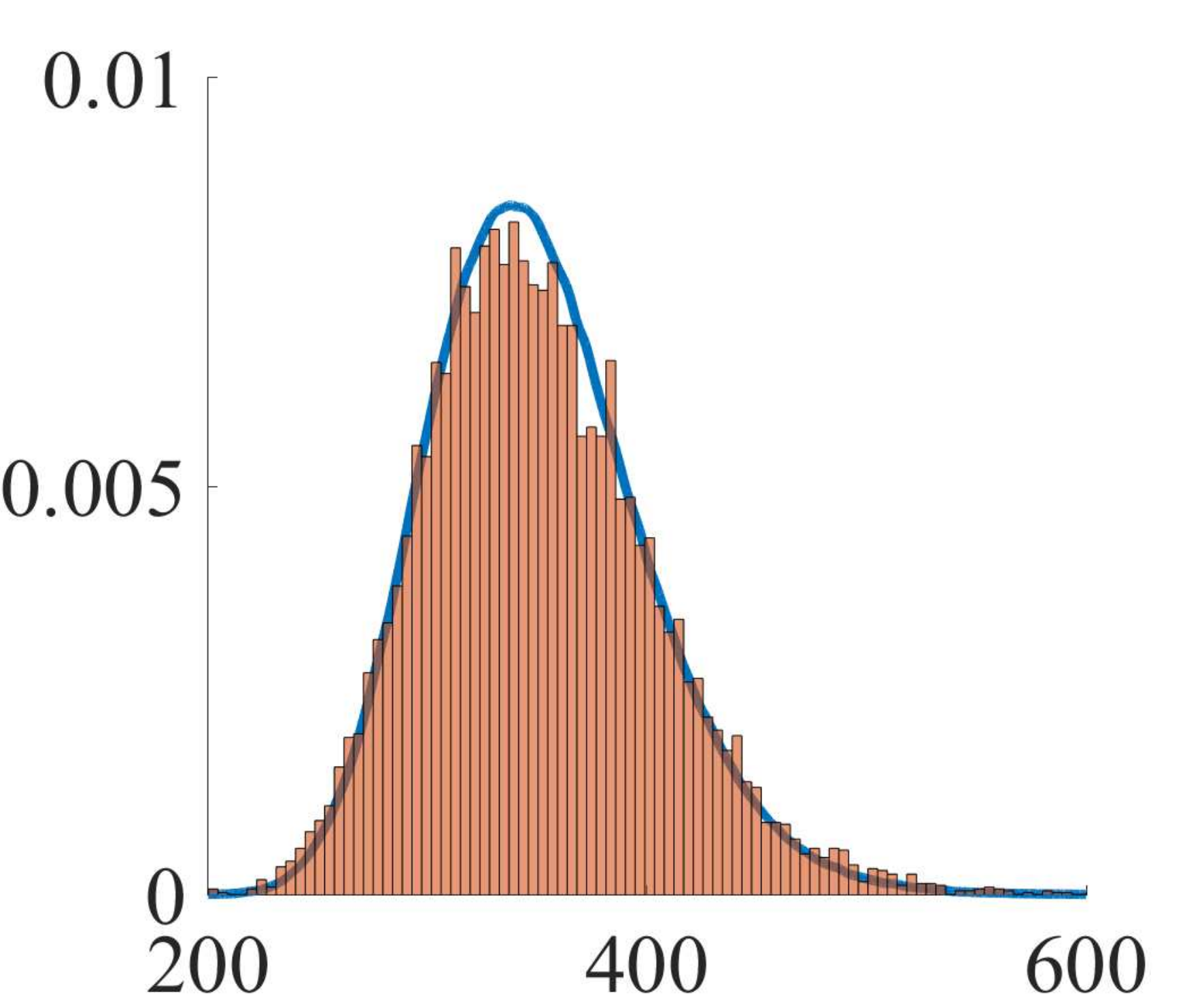}}
\caption{Empirical distribution of $n \times \mathds D\big( (\msa_n, \covsa_n) \parallel (m, S) \big)$ collected from 10,000 datasets (orange histogram) versus the limiting distribution $H^{\top} H + \frac{1}{2} \Tr{Z^2}$ obtained by Monte Carlo simulations (blue curve) for different sample sizes $n$.}
\label{plot:clt}
\end{figure}

\section{Optimistic Nonparametric Score}
\label{sect:nonparam}

We consider in this section the nonparametric setting in which no prior assumption on the class-conditional distribution is imposed. A major difficulty in this nonparametric setting is the elicitation of a reasonable score function $\ell$ that can coherently encapsulate the plausibility of observing $x$ over the whole spectrum of admissible $\QQ$, including continuous, discrete and mixed continuous/discrete distributions, while at the same time being amenable for optimization purposes. Taking this fact into consideration, we thus posit to choose the score function of the form
\begin{equation*}
\ell(x, \QQ) \equiv \QQ(\{x\}),
\end{equation*}
which is the probability value of the singleton, measurable set $\{x\}$ under the measure $\QQ$. If $\QQ$ is a continuous distribution, then apparently $\QQ(\{x\})$ is zero, hence this score function is admittedly not perfect. Nevertheless, it serves as a sensible proxy in the nonparametric setting and delivers competitive performance in machine learning tasks~\cite{ref:nguyen2019optimistic}. It is reasonable to set  $\mc P \equiv \mc M$ in the nonparametric setting, and with this choice of $\ell$, the optimistic nonparametric score evaluation problem becomes
\[
    \Sup{\QQ \in \mbb B_\rho(\Pnom)} ~\QQ(\{x\}),
\]
which is inherently challenging because it is an
infinite-dimensional optimization problem. The next theorem asserts that solving the nonparametric optimistic likelihood optimization problem is equivalent to solving a univariate convex optimization problem.

\begin{theorem}[Optimistic nonparametric probability] \label{thm:nonparam}
    \begin{subequations}
    Suppose that $\Pnom\sim(\msa, \covsa)$ for some $\msa \in \R^d$ and $\covsa \in \PD^d$. For any $\rho \in \R_+$, we have
    \begin{align}
    &\Sup{\QQ \in \mbb B_\rho(\Pnom)} ~\QQ(\{x\})  \notag \\
    =& \Max{(\m, \cov) \in \mc U_\rho(\msa, \covsa)}~[1+ (\m - x)^\top \cov^{-1} (\m - x)]^{-1} \label{eq:non-param:refor0}\\
	=& [1+ (\m\opt - x)^\top (\cov\opt)^{-1} (\m\opt - x)]^{-1}, \label{eq:non-param:refor00}
    \end{align}
where $(\m\opt, \cov\opt) \in \R^d \times \PD^d$ satisfies
\be \label{eq:non-param:refor1}
    \begin{aligned}
    \m\opt &= \frac{x + \dualvar\opt \msa}{1 + \dualvar\opt}, \\
    \cov\opt &= \covsa +  \frac{1}{(1+\dualvar\opt)} (x - \msa)(x - \msa)^\top,
    \end{aligned}
\ee
and $\dualvar\opt \in \R_+$ solves the univariate convex optimization problem
\be \label{eq:non-param:refor2}
\Min{\dualvar \ge 0 }~\dualvar \rho - \dualvar\log \Big( 1 + \frac{(x - \msa)^\top \covsa^{-1} (x - \msa)}{1+\dualvar}   \Big).
\ee
\end{subequations}
\end{theorem}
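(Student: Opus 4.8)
The plan is to exploit the two-layer decomposition~\eqref{eq:two-layer} with $\mc P \equiv \M$, which reduces $\Sup{\QQ \in \mbb B_\rho(\Pnom)} \QQ(\{x\})$ to an outer maximization over $(\m,\cov) \in \mc U_\rho(\msa,\covsa)$ of the inner Chebyshev-type problem $\Sup{\QQ \in \M(\m,\cov)} \QQ(\{x\})$. First I would solve this inner problem by an extremal-measure argument: write any feasible $\QQ$ as $\QQ = p\,\delta_x + (1-p)\QQ_1$; matching the first two moments forces $\QQ_1$ to have mean $(\m - px)/(1-p)$ and second-moment matrix $M_1 = (\cov + \m\m^\top - p\,xx^\top)/(1-p)$, and the only remaining requirement is $M_1 \succeq \big((\m - px)/(1-p)\big)\big((\m - px)/(1-p)\big)^\top$, which simplifies to the Löwner inequality $(1-p)\cov \succeq p\,(\m - x)(\m - x)^\top$, i.e. $p \le [1 + (\m - x)^\top \cov^{-1}(\m - x)]^{-1}$ by a rank-one Schur complement. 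Taking $p$ at this bound (and any $\QQ_1$ with the prescribed mean and covariance) shows
\[
\Sup{\QQ \in \M(\m,\cov)} \QQ(\{x\}) = [1 + (\m - x)^\top \cov^{-1}(\m - x)]^{-1},
\]
which is~\eqref{eq:non-param:refor0}; this is also a special case of the moment bounds in \citet{ref:wiesemann2014distributionally}. (The supremum is attained when $\m \neq x$; when $\m = x$ it equals $1$ but is only approached as $p \uparrow 1$.)

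It then remains to solve the outer problem $\Min{(\m,\cov) \in \mc U_\rho(\msa,\covsa)} (\m - x)^\top \cov^{-1}(\m - x)$, since $t \mapsto (1+t)^{-1}$ is decreasing. I would attack this by Lagrangian duality, dualizing the constraint $\mathds D\big((\msa,\covsa) \parallel (\m,\cov)\big) \le \rho$ with a multiplier $\dualvar \ge 0$. Minimizing the Lagrangian over $\m$ is an unconstrained convex quadratic with solution $\m\opt(\dualvar) = (x + \dualvar\msa)/(1+\dualvar)$; plugging this back collapses the two $\cov^{-1}$-quadratics into the single term $\tfrac{\dualvar}{1+\dualvar}(x-\msa)^\top \cov^{-1}(x-\msa)$. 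Rewriting the residual minimization over $\cov$ in the precision variable $\cov^{-1}$ turns it into a convex problem (the $\log\det$ now appears with the right sign), with stationarity point $\cov\opt(\dualvar) = \covsa + \tfrac{1}{1+\dualvar}(x-\msa)(x-\msa)^\top$, matching~\eqref{eq:non-param:refor1}. Using the matrix determinant lemma, the dual function reduces to $g(\dualvar) = \dualvar\big[\log\big(1 + \tfrac{(x-\msa)^\top\covsa^{-1}(x-\msa)}{1+\dualvar}\big) - \rho\big]$, so $\max_{\dualvar \ge 0} g(\dualvar)$ is exactly the negative of the univariate program~\eqref{eq:non-param:refor2}; convexity of that program is automatic because $g$ is a pointwise infimum of functions affine in $\dualvar$, hence concave.

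The crux is to certify that there is no duality gap, which is \emph{not} automatic because $\mc U_\rho(\msa,\covsa)$ is non-convex. Weak duality already gives that the outer optimal value is $\ge \max_\dualvar g(\dualvar)$; for the converse I would take $\dualvar\opt$ attaining the univariate minimum of~\eqref{eq:non-param:refor2} and verify that $(\m\opt,\cov\opt)$ from~\eqref{eq:non-param:refor1} is primal feasible with objective value equal to $g(\dualvar\opt)$. The key computation is that, along this curve, Sherman--Morrison and the determinant lemma give $\mathds D\big((\msa,\covsa) \parallel (\m\opt(\dualvar),\cov\opt(\dualvar))\big) = \tfrac{\diff}{\diff\dualvar}\big[\dualvar\log\big(1 + \tfrac{(x-\msa)^\top\covsa^{-1}(x-\msa)}{1+\dualvar}\big)\big]$, so interior stationarity of~\eqref{eq:non-param:refor2} forces the divergence constraint to hold with equality and hence complementary slackness; and if instead the minimum of~\eqref{eq:non-param:refor2} sits at $\dualvar\opt = 0$ — which occurs precisely when $\rho \ge \log\big(1 + (x-\msa)^\top\covsa^{-1}(x-\msa)\big)$ — then the candidate degenerates to $\m\opt = x$, $\cov\opt = \covsa + (x-\msa)(x-\msa)^\top$, whose divergence $\log\big(1+(x-\msa)^\top\covsa^{-1}(x-\msa)\big) \le \rho$ keeps it feasible. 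In either case $(\m - x)^\top \cov^{-1}(\m - x)$ at $(\m\opt,\cov\opt)$ equals $g(\dualvar\opt) = \max_\dualvar g(\dualvar)$, closing the gap, identifying $(\m\opt,\cov\opt)$ as the outer minimizer, and yielding~\eqref{eq:non-param:refor00}. I expect this last step to be the main obstacle: since $\mc U_\rho(\msa,\covsa)$ is non-convex, strong duality must be established by hand via the explicit primal–dual pair, and the argument hinges on the fortunate identity that $\mathds D$ restricted to the dual-optimal curve $\dualvar \mapsto (\m\opt(\dualvar),\cov\opt(\dualvar))$ coincides with $g'(\dualvar) + \rho$; a secondary nuisance is bookkeeping the boundary regimes ($\dualvar\opt = 0$, where the overall supremum equals $1$ but is not attained, and the degenerate $\rho = 0$, where $\mc U_\rho$ is a singleton and~\eqref{eq:non-param:refor2} has no finite minimizer).
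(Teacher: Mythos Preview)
Your argument is correct and takes a genuinely different route from the paper. Both proofs share the two-layer decomposition and the Marshall--Olkin bound for the inner Chebyshev problem, but diverge on the outer non-convex minimization over $\mc U_\rho(\msa,\covsa)$. The paper reparametrizes to the precision $\Omega = \cov^{-1}$ and decouples the variables: a dedicated lemma dualizes the inner minimization over $\m$ (for fixed $\Omega$) into a $\max_{\lambda \ge 0}$; the resulting $\min_\Omega \max_\lambda$ problem has a \emph{compact convex} $\Omega$-feasible set, so Sion's minimax theorem swaps the operators without gap; a second dualization of the $\Omega$-constraint and the substitution $\gamma = \lambda + \nu$ then collapse everything to~\eqref{eq:non-param:refor2}. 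You instead dualize the single divergence constraint in one stroke, solve the Lagrangian over $(\m,\cov)$ in closed form, and close the duality gap by hand via the identity $\mathds D\big((\msa,\covsa)\parallel(\m\opt(\gamma),\cov\opt(\gamma))\big) = g'(\gamma) + \rho$, which makes primal feasibility and complementary slackness transparent at the univariate optimizer $\gamma\opt$. Your route is shorter and more self-contained---no minimax theorem, no auxiliary lemma---and the explicit primal--dual verification explains directly \emph{why} this particular non-convex problem has zero gap; the paper's route, by contrast, obtains strong duality from off-the-shelf convex tools once the precision reparametrization convexifies the feasible set, a device that then transfers mechanically to the Gaussian case in Theorem~\ref{thm:Gauss}.
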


Because the feasible set $\mbb B_\rho(\Pnom)$ is not weakly compact,
the existence of an optimal measure that solves the optimistic likelihood problem on the left-hand side of~\eqref{eq:non-param:refor0} is not
trivial. However, equation~\eqref{eq:non-param:refor0} asserts that this optimal measure exists, and it can be constructed by solving a non-convex optimization problem over the mean vector-covariance matrix tuple $(\m, \cov)$. Notice that~\eqref{eq:non-param:refor0} is a non-convex optimization problem
because $\mc U_\rho(%
\msa, \covsa)$ is a non-convex set. Surprisingly, one can show that the optimizer of~\eqref{eq:non-param:refor0} can be found semi-analytically: the maximizer~$(\m\opt, \cov\opt)$ depends only on a single scalar $\dualvar\opt$ through~\eqref{eq:non-param:refor1}, where $\dualvar\opt$ solves the univariate optimization problem~\eqref{eq:non-param:refor2}. Because problem~\eqref{eq:non-param:refor2} is convex, $\dualvar\opt$ can be efficiently found using a bisection algorithm or using a Newton-Raphson method, and we expose in Appendix~\ref{sect:gradient} the first- and second-order derivative of the objective function of~\eqref{eq:non-param:refor2}.

A nonparametric classifier $\mc C_{\text{nonparam}}$ can be formed by utilizing the optimistic nonparametric score ratio
\be \label{eq:nonparam-ratio}
\mc R_{\text{nonparam}} (x) \Let \frac{\Sup{\QQ \in \mbb
B_{\rho_1}(\Pnom_1)} \QQ(\{x\}) }{ \Sup{\QQ \in \mbb
B_{\rho_0}(\Pnom_0)} \QQ(\{x\})},
\ee
where each nominal class-conditional distribution $\Pnom_c$ has mean vector ${\msa_c \in \R^d}$ and covariance matrix $\covsa_c \in \PD^d$, and each ambiguity set $\mbb B_{\rho_c}(\Pnom_c)$ is defined as in~\eqref{eq:B-def}. The results of Theorem~\ref{thm:nonparam} can be used to compute the numerator and denominator of~\eqref{eq:nonparam-ratio}, thus the classification decision~$\mc C_{\text{nonparam}}(x)$ can be efficiently evaluated. In particular, by substituting the expression~\eqref{eq:non-param:refor0} into~\eqref{eq:nonparam-ratio}, we also find
\begin{align*}
        &\mc R_{\text{nonparam}} (x) \\
        =&\frac{\Max{(\m, \cov) \in \mc U_{\rho_1}(\msa_1, \covsa_1)}~[1+ (\m - x)^\top \cov^{-1} (\m - x)]^{-1}}{\Max{(\m, \cov) \in \mc U_{\rho_0}(\msa_0, \covsa_0)}~[1+ (\m - x)^\top \cov^{-1} (\m - x)]^{-1}} \\
        =&\frac{1+ \Min{(\m, \cov) \in \mc U_{\rho_0}(\msa_0, \covsa_0)}(\m - x)^\top \cov^{-1} (\m - x)}{1+ \Min{(\m, \cov) \in \mc U_{\rho_1}(\msa_1, \covsa_1)}(\m - x)^\top \cov^{-1} (\m - x)}.
        \end{align*}
Suppose that $\rho_0 = \rho_1 = 0$ and $\tau(x) = 1$, then the nonparametric classifier assigns~$\mc C_{\text{nonparam}} (x) = 1$ whenever
\[ (\msa_1 - x)^\top \covsa_1^{-1} (\msa_1 - x) \le (\msa_0 - x)^\top \covsa_0^{-1} (\msa_0 - x),
\]
and $\mc C_{\text{nonparam}} (x) = 0$ otherwise. In this case, the classifier coincides with the class-specific Mahalanobis distance classifier (MDC) where $\covsa_0$ and $\covsa_1$ denote the intra-class nominal covariance matrices. If in addition the nominal covariance matrices are homogeneous, that is, $\covsa_0 = \covsa_1$, then this classifier coincides with the Linear Discriminant Analysis (LDA)~\citep[Section~4.2.2]{ref:murphy2012machine}. The Bayesian version of LDA can be equivalently obtained from $\mc C_{\text{nonparam}}$ by setting a proper value of $\tau(x)$. This important observation reveals an intimate link between our proposed classifier $\mc C_{\text{nonparam}}$ using the optimistic nonparametric score ratio and the popular classifiers MDC and LDA. On the one hand, $\mc C_{\text{nonparam}}$ can now be regarded as a generalization of MDC and LDA, which takes into account the statistical imprecision of the estimated moments and/or the potential shift in the moment statistics in test data versus training data distributions. On the other hand, both MDC and LDA now admit a nonparametric, generative interpretation in which the class-conditional distribution is chosen in the set of all distributions with the same first- and second-moments as the nominal class-conditional measure $\Pnom_c$. This novel interpretation goes beyond the classical Gaussian model, and it potentially explains the versatile performance of MDC and LDA when the conditional distribution are not normally distributed as empirically observed in \cite{ref:lee2018simple}.

\section{Optimistic Gaussian Score}
\label{sect:param}

We now consider the optimistic score evaluation problem under a parametric setting. For simplicity, we assume that the true class-conditional distributions of the feature belong to the family of Gaussian distributions. Thus, a natural choice of the score value $\ell(x, \QQ)$ in this case is the Gaussian likelihood of an observation $x$ when $\QQ$ is a Gaussian distribution with mean $\m$ and covariance matrix $\cov$, that is,
\[
    \ell(x, \QQ) = \frac{1}{\sqrt{(2\pi)^d \det \cov}} \exp\Big(-\frac{ (x - \m)^\top\cov^{-1} (x-\m)}{2} \Big).
\]
It is also suitable to set $\mc P$ in problem~\eqref{eq:prob} to the (sub)space of Gaussian distributions $\mc N$ and consider the following optimistic Gaussian score evaluation problem
\be \label{eq:Gauss-likelihood}
\Sup{\QQ \in \mathbb{B}_\rho(\Pnom) \cap \N}~ \ell(x, \QQ).
\ee
One can verify that the maximizer of problem~\eqref{eq:Gauss-likelihood} coincides with the maximizer of
\be \notag
\Sup{\QQ \in \mathbb{B}_\rho(\Pnom) \cap \N}~ \mc L(x, \QQ),
\ee
where $\mc L$ is the translated Gaussian \textit{log}-likelihood defined as
\begin{align*}
\mc L(x, \QQ) &=2\left( \log\big(\ell(x, \QQ)\big) + \frac{d}{2} \log(2\pi) \right)\\
&=-(\m - x)^\top \cov^{-1} (\m - x) - \log\det \cov.
\end{align*}
Theorem~\ref{thm:Gauss} is a counterpart to the optimistic nonparametric likelihood presented in Theorem~\ref{thm:nonparam}.

\begin{theorem}[Optimistic Gaussian log-likelihood] \label{thm:Gauss}
    Suppose that $\Pnom\sim\N(\msa, \covsa)$ for some $\msa \in \R^d$ and $\covsa \in \PD^d$. For any $\rho \in \R_+$, we have
\begin{subequations}
    \begin{align}
    &\Sup{\QQ \in \mbb B_\rho(\Pnom) \cap \N} ~ \mc L(x, \QQ) \notag \\
    =& \Max{(\m, \cov) \in \mc U_\rho(\msa, \covsa)}~-(\m - x)^\top \cov^{-1} (\m - x) -\log\det \cov \label{eq:gauss:refor0} \\
    =& -(\m\opt - x)^\top (\cov\opt)^{-1} (\m\opt - x) -\log\det \cov\opt, \label{eq:gauss:refor00}
    \end{align}
where $(\m\opt, \cov\opt) \in \R^d \times \PD^d$ satisfies
\be \label{eq:gauss:refor1}
    \begin{aligned}
    \m\opt &= \frac{x + \dualvar\opt \msa}{1 + \dualvar\opt},\\
    \cov\opt &=  \frac{\dualvar\opt}{1+\dualvar\opt} \covsa +  \frac{\dualvar\opt}{(1+\dualvar\opt)^2} (x - \msa)(x - \msa)^\top,
    \end{aligned}
\ee
and $\dualvar\opt \in \R_+$ solves the univariate convex optimization problem
\be \label{eq:gauss:refor2}
\begin{aligned}
\Min{\dualvar \ge 0 }&\Big\{ \dualvar\rho + {d}(\dualvar + 1) \log\Big( 1 + \frac{1}{\dualvar} \Big) \\
& \!\!\! - (1+\dualvar) \log \Big(1 + \frac{ (x - \msa)^\top \covsa ^{-1}(x - \msa)}{ (1+\dualvar)}\Big) \Big\}.
\end{aligned}
\ee
\end{subequations}
\end{theorem}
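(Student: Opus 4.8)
Following the same template as Theorem~\ref{thm:nonparam}, the plan is to (i) collapse the infinite-dimensional problem to a finite-dimensional maximization over the moment pair $(\m,\cov)$, (ii) dualize the single divergence constraint, (iii) solve the inner Lagrangian maximization in closed form, and (iv) close the \emph{a priori} nonzero duality gap by exhibiting a feasible primal point that attains the dual bound. For step (i), every $\QQ\in\mbb B_\rho(\Pnom)\cap\N$ is a Gaussian $\N(\m,\cov)$ with $(\m,\cov)\in\mc U_\rho(\msa,\covsa)$ by~\eqref{eq:B-def}, and $\mc L(x,\QQ)$ depends on $\QQ$ only through $(\m,\cov)$, so the supremum equals $\sup_{(\m,\cov)\in\mc U_\rho(\msa,\covsa)}\{-(\m-x)^\top\cov^{-1}(\m-x)-\log\det\cov\}$; this is~\eqref{eq:gauss:refor0} modulo attainment. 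For step (ii), introduce $\dualvar\ge0$ for the constraint $\mathds D((\msa,\covsa)\parallel(\m,\cov))\le\rho$, write $L(\m,\cov,\dualvar)$ for the Lagrangian, and note weak duality: the primal optimum is at most $\inf_{\dualvar\ge0}\sup_{(\m,\cov)\in\R^d\times\PD^d}L(\m,\cov,\dualvar)$.

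For step (iii), fix $\dualvar>0$ and maximize $L$ first over $\m$: the $\m$-dependent terms form a strictly concave quadratic with Hessian $-2(1+\dualvar)\cov^{-1}$, whose maximizer $\m\opt(\dualvar)=(x+\dualvar\msa)/(1+\dualvar)$ is independent of $\cov$ --- this gives the first line of~\eqref{eq:gauss:refor1}. Substituting it back and reparametrizing by the precision matrix $P=\cov^{-1}$, the residual problem becomes $\sup_{P\succ0}\{-\Tr{AP}+(1+\dualvar)\log\det P\}$ plus a constant, with $A\defeq\dualvar\covsa+\tfrac{\dualvar}{1+\dualvar}(x-\msa)(x-\msa)^\top\succ0$; this is concave in $P$, and its stationarity condition $A=(1+\dualvar)P^{-1}$ yields $\cov\opt(\dualvar)=\tfrac{1}{1+\dualvar}A$, i.e.\ the second line of~\eqref{eq:gauss:refor1}. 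Plugging $(\m\opt(\dualvar),\cov\opt(\dualvar))$ into $L$ and simplifying $\log\det\cov\opt(\dualvar)$ with the matrix determinant lemma gives the dual function $\phi(\dualvar)$, which coincides with the objective of~\eqref{eq:gauss:refor2} up to the additive constant $-d-\log\det\covsa$; hence $\phi$ and~\eqref{eq:gauss:refor2} have the same minimizers. A direct second-derivative computation shows this objective is convex on $(0,\infty)$ and diverges to $+\infty$ as $\dualvar\downarrow0$, and also as $\dualvar\uparrow\infty$ when $\rho>0$, so a minimizer $\dualvar\opt\in(0,\infty)$ exists, is characterized by stationarity, and makes $\cov\opt(\dualvar\opt)\succ0$.

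For step (iv), set $(\m\opt,\cov\opt)\defeq(\m\opt(\dualvar\opt),\cov\opt(\dualvar\opt))$. Since $\dualvar\opt$ minimizes $\phi$, and $\phi$ is the maximized value of $L$ over $(\m,\cov)$, Danskin's theorem gives $\phi'(\dualvar\opt)=\rho-\mathds D((\msa,\covsa)\parallel(\m\opt,\cov\opt))=0$, so $(\m\opt,\cov\opt)$ is feasible with the constraint active. Therefore the primal objective at $(\m\opt,\cov\opt)$ equals $L(\m\opt,\cov\opt,\dualvar\opt)$ (the penalty vanishes), which equals $\phi(\dualvar\opt)=\inf_{\dualvar\ge0}\phi(\dualvar)$, which by weak duality upper-bounds the primal optimum, which in turn is at least the objective at the feasible point $(\m\opt,\cov\opt)$. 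This chain of inequalities collapses to equalities, so the supremum is attained at $(\m\opt,\cov\opt)$ and equals the expression in~\eqref{eq:gauss:refor00}, which is what~\eqref{eq:gauss:refor0}--\eqref{eq:gauss:refor2} assert.

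\textbf{The main obstacle} is precisely step (iv): $\mc U_\rho(\msa,\covsa)$ is non-convex because of the $\log\det$ term in $\mathds D$, so no textbook strong-duality theorem applies; instead one must construct the optimal primal pair explicitly, and this hinges on the two structural observations that the $\m$-maximizer is $\cov$-free and that the $\cov$-maximization becomes concave in the precision parametrization. A secondary nuisance is the bookkeeping for the boundary regimes $\dualvar\downarrow0$, $\dualvar\uparrow\infty$, and the degenerate case $\rho=0$ (where $\mc U_\rho(\msa,\covsa)=\{(\msa,\covsa)\}$ and $\dualvar\opt$ recedes to $+\infty$), needed to keep $\cov\opt$ positive definite throughout.
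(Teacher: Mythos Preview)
Your proposal is correct and reaches the same conclusion, but it takes a genuinely different route from the paper's proof.

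The paper does not dualize the single divergence constraint directly. Instead, it reparametrizes $\Omega\leftarrow\cov^{-1}$ and splits the problem into a two-layer optimization: an inner minimization over $\m$ subject to the constraint $(\m-\msa)^\top\Omega(\m-\msa)\le\overline\rho-\Tr{\covsa\Omega}+\log\det\Omega$, handled via a separate Lemma~\ref{lemma:g-refor} (which introduces a first dual variable $\lambda$), and an outer minimization over $\Omega$ in the convex, compact set $\{\Omega\in\PD^d:\Tr{\covsa\Omega}-\log\det\Omega\le\overline\rho\}$. Sion's minimax theorem is invoked to swap $\min_\Omega$ and $\max_\lambda$, then a second dual variable $\nu$ is introduced for the (redundant) $\Omega$-constraint, the change of variables $\dualvar\leftarrow\lambda+\nu$ is performed, and finally the observation $\lambda\opt(\dualvar)=\dualvar$ collapses the two-parameter problem to the univariate one in~\eqref{eq:gauss:refor2}. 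Strong duality is thus obtained indirectly, through the convexity of the $\Omega$-feasible set combined with Sion's theorem, rather than by constructing a primal witness.

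Your approach is more direct and more elementary: a single Lagrangian multiplier, the structural observation that the $\m$-maximizer of the Lagrangian is $\cov$-free (which the paper never isolates explicitly), a concave maximization in the precision matrix, and then closure of the duality gap by the envelope theorem and complementary slackness. You avoid both Sion's theorem and the two-variable $(\lambda,\dualvar)$ reduction. What the paper's detour buys is a uniform template shared with Theorem~\ref{thm:nonparam}, where the objective $(1+(\m-x)^\top\cov^{-1}(\m-x))^{-1}$ is genuinely non-separable and your direct Lagrangian route would not simplify as cleanly; here, for the Gaussian log-likelihood, your shortcut works precisely because the $\m$- and $\cov$-dependencies decouple after the first maximization.
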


Notice that we impose the condition $\Pnom\sim \mc N(\msa, \covsa)$ in Theorem~\ref{thm:Gauss} to conform with the belief that the true data generating distribution is Gaussian. This condition, in fact, can be removed without affecting the result presented in Theorem~\ref{thm:Gauss}. Indeed, for any radius $\rho \ge 0$, the ambiguity set $\mbb B_{\rho}(\Pnom)$ by definition contains a Gaussian distribution with the same mean vector and covariance matrix with the nominal distribution $\Pnom$, and thus the feasible set of~\eqref{eq:Gauss-likelihood} is always non-empty and the value of the optimistic Gaussian log-likelihood is always finite. In Appendix~\ref{sect:gradient}, we provide the first- and second-order derivatives of the objective function of~\eqref{eq:gauss:refor2}, which can be exploited to derive efficient algorithm to solve the convex program~\eqref{eq:gauss:refor2}.

Returning to the construction of the classifier, one can now construct the classifier $\mc C_{\N}(x)$ using the optimistic Gaussian score ratio $\mc R_{\N}(x)$ expressed by
\begin{align}
    \mc R_{\mc N} (x) &\Let \frac{ \Sup{\QQ \in \mbb
B_{\rho_1}(\Pnom_1) \cap \N} \ell(x, \QQ)}{\Sup{\QQ \in \mbb
B_{\rho_0}(\Pnom_0) \cap \N} \ell(x, \QQ)}  \notag \\
&= \frac{\exp \Big(\half \Sup{\QQ \in \mbb
B_{\rho_1}(\Pnom_1) \cap \N} \mc L(x, \QQ) \Big)}{\exp \Big(\half \Sup{\QQ \in \mbb
B_{\rho_0}(\Pnom_0) \cap \N} \mc L(x, \QQ) \Big)}, \label{eq:ratio_gaussian}
\end{align}
where each nominal distribution $\Pnom_c$ is a Gaussian distribution with mean vector $\msa_c \in \R^d$ and covariance matrix $\covsa_c \in \PD^d$, and each ambiguity set $\mbb B_{\rho_c}(\Pnom_c)$ is defined as in~\eqref{eq:B-def}. Theorem~\ref{thm:Gauss} can be readily applied to evaluate the value $\mc R_{\N}(x)$, and classify $x$ using $\mc C_{\N}(x)$.
Furthermore, suppose that $\rho_0 = \rho_1 = 0$ and $\tau(x) = 1$, then the resulting classifier recovers the Quadratic Discriminant Analysis~\citep[Section~4.2.1]{ref:murphy2012machine}. The Bayesian version of the QDA can be equivalently obtained from $\mc C_{\N}$ by setting a proper value for $\tau(x)$.

It is imperative to elaborate on the improvement of Theorem~\ref{thm:Gauss} compared to the result reported in \citet[Section~3]{ref:nguyen2019calculating}. While both results are related to the evaluation of the optimistic Gaussian log-likelihood, \citet[Theorem~3.2]{ref:nguyen2019calculating} restricts the mean vector to its nominal value and optimizes only over the covariance matrix. On the other hand, Theorem~\ref{thm:Gauss} of this paper optimizes over both the mean vector and the covariance matrix, thus provides full flexibility to choose the optimal values of \textit{all} sufficient statistics of the family of Gaussian distributions. From a technical standpoint, the non-convexity is overcome in \citet[Theorem~3.2]{ref:nguyen2019calculating} through a simple change of variables; nonetheless, the proof of Theorem~\ref{thm:Gauss} demands an additional layer of duality arguments to disentangle the multiplicative terms between $\m$ and $\cov$ in both the objective function $\mc L$ and the divergence $\mathds D$. By inspecting the expressions in~\eqref{eq:gauss:refor1}, one can further notice that in general the optimal solution $\m\opt$ is distinct from the nominal mean $\msa$, this observation suggests that optimizing \textit{jointly} over $(\m, \cov)$ is indeed more powerful than optimizing simply over $\cov$ from a theoretical perspective.


\section{Numerical Results}
\label{sect:numerical}

All experiments are run on a standard laptop with 1.4 GHz Intel Core i5 and 8GB of memory, the codes and datasets are available at~\url{https://github.com/nian-si/bsc}.

\subsection{Decision Boundaries}
In this section, we visualize the classification decision boundaries generated by the classifiers $\mc C_{\text{nonparam}}$ proposed in Section~\ref{sect:nonparam} and $\mc C_{\N}$ proposed in Section~\ref{sect:param} using synthetic data. To ease the exposition, we consider a two dimensional feature space $d = 2$ and the class-conditional distributions are Gaussian of the form
\[
X | Y\!=\!0\!\sim\!\N\left(\begin{bmatrix} 0 \\ 1 \end{bmatrix}\!,\!I_{d} \right),
X | Y\!=\!1\!\sim\!\N\left(\begin{bmatrix} 1 \\ 0\end{bmatrix}\!,\!\begin{bmatrix}
  1 &\!0.5\\
  0.5 &\!1
\end{bmatrix} \right).
\]
We sample i.i.d.~data points $\{\wh x_{c, i}\}_{i = 1}^{n_c}$ in each $c \in \{0, 1\}$ with $n_0 = n_1=1000$  as the training set, then estimate the nominal mean $\msa_c$ and the nominal covariance matrix $\covsa_c$ for each class $c \in \{0, 1\}$ using the sample average formula~\eqref{eq:sample-average}.

We first consider when the ambiguity sets have the same radius, i.e.,  $\rho_0 =\rho_1=\wh\rho$ and fix the threshold $\tau(x) = 1$ for every $x$. Figure \ref{fig:same_rho} shows the optimistic Gaussian and nonparametric decision boundaries for $\wh \rho \in \{0.5,0.7\}$.  We find that for optimistic Gaussian decision rule, the decision boundaries look similar across different radii; while in nonparametric case, the decision boundaries exhibit different shapes. We then consider the case with distinct radii by setting $\vec{\rho} = (\rho_0,\rho_1)= (0.1,1.0)$ and $\vec{\rho} = (\rho_0,\rho_1)= (1.0,0.1)$. Further, we fix the threshold to a constant $\tau(x)  = \tau\opt$ for a scalar $\tau\opt \in \R_+$ that solves
\be
\Max{\tau \geq 0} \sum_{i=1}^{n_0} \mathbf{1} \{\mc R(\wh x_{0, i})\!<\!\tau\}\!+\!\sum_{i=1}^{n_1} \mathbf{1} \{\mc R(\wh x_{1, i})\!\geq\!\tau\} ,
\label{eqn:tune_gamma}
\ee
where $\mathbf{1}\{ \cdot \}$ is the indicator function.
The decision boundaries are plotted in Figure \ref{fig:different_rho}. We find the decision boundaries have different shapes for different decision rules and for different choices of radii.

\begin{figure}[!ht]
\centering
\subfigure[Gaussian, $\wh \rho = 0.5$]{
\label{pic:kl_0.5} \includegraphics[width=1.51in]{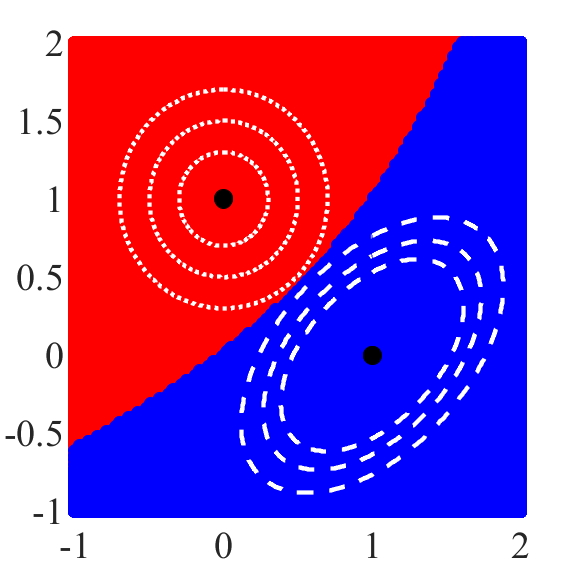}}
\subfigure[Nonparametric, $\wh \rho = 0.5$]{
\label{pic:non_para_0.5} \includegraphics[width=1.51in]{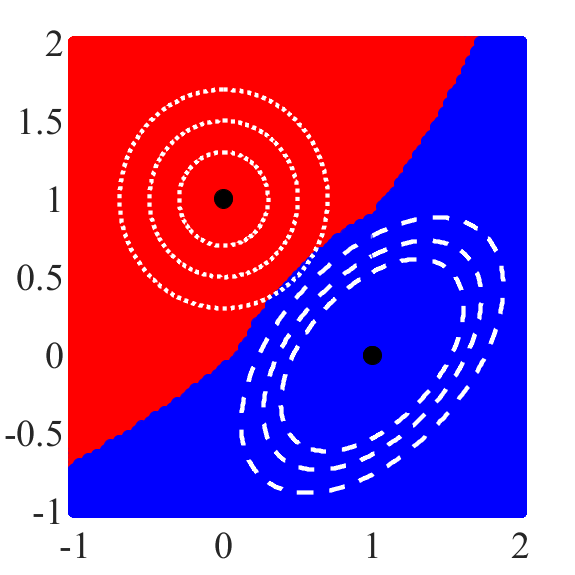}}
\subfigure[Gaussian, $\wh \rho = 0.7$]{
\label{pic:kl_1} \includegraphics[width=1.51in]{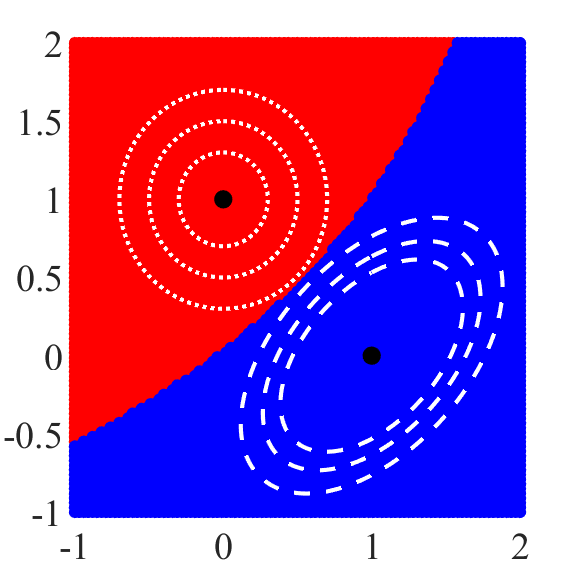}}
\subfigure[Nonparametric, $\wh \rho = 0.7$]{
\label{pic:non_para_1} \includegraphics[width=1.51in]{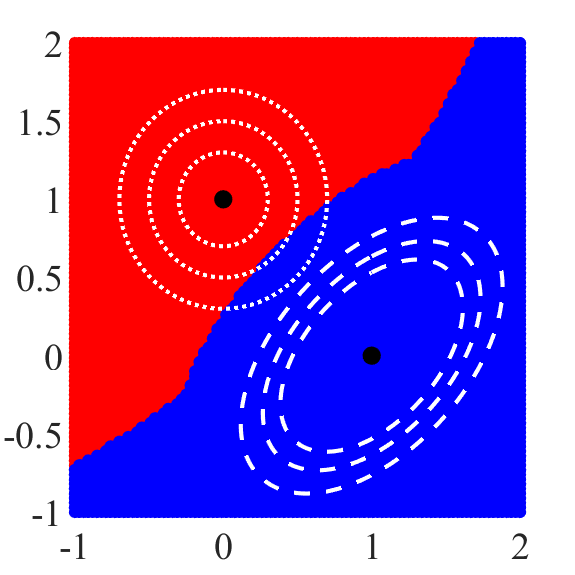}}
\caption{Decision boundaries for different $\wh \rho$. Red/blue regions indicate the class partitions, black dots locate the mean, and white dashed ellipsoids draw  the class-conditional density contours.}
\label{fig:same_rho}
\vspace{-3mm}
\end{figure}

\begin{figure}[ht]
\centering
\subfigure[\scriptsize Gaussian, $\vec{\rho}\!=\!(0.1,1.0)$]{
\label{pic:kl_0.1_1.0} \includegraphics[width=1.51in]{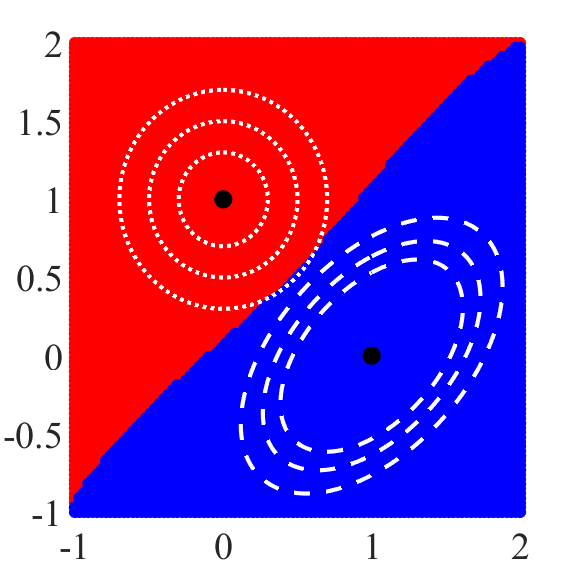}}
\subfigure[\scriptsize Nonparametric, $\vec{\rho}\!=\!(0.1,1.0)$]{
\label{pic:non_para_0.1_1.0} \includegraphics[width=1.51in]{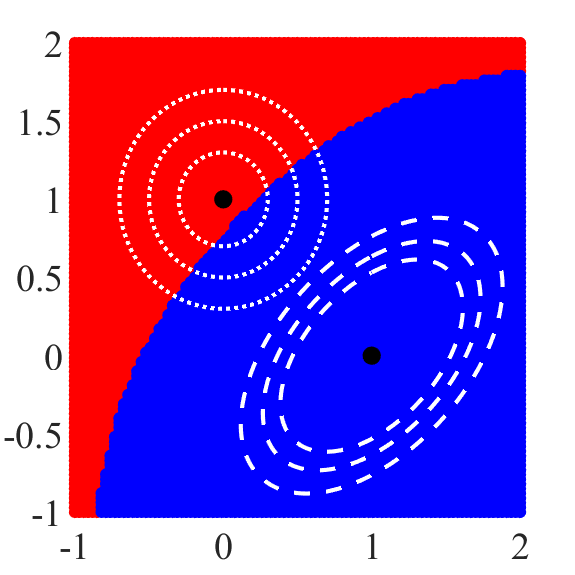}} \vspace{-2mm}
\subfigure[\scriptsize Gaussian,  $\vec{\rho}\!=\!(1.0,0.1)$]{
\label{pic:kl_1.0_0.1} \includegraphics[width=1.51in]{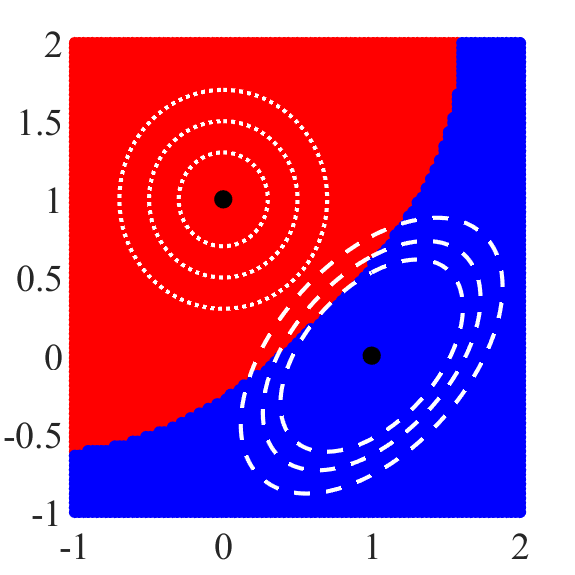}}
\subfigure[\scriptsize Nonparametric, $\vec{\rho}\!=\!(1.0,0.1)$]{
\label{pic:non_para_1.0_0.1} \includegraphics[width=1.51in]{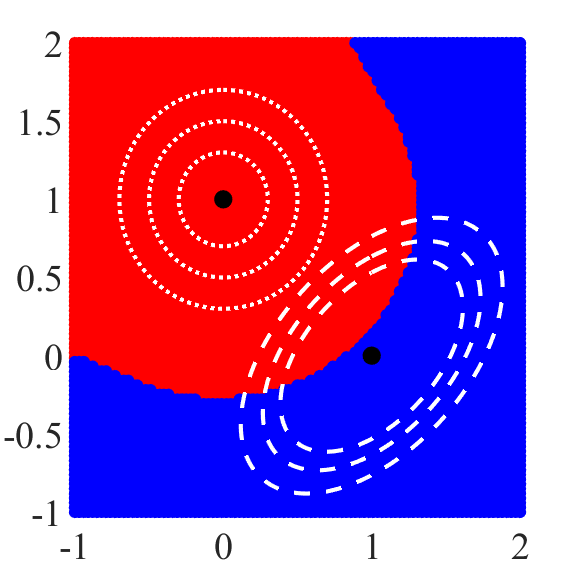}}
\caption{Decision boundaries with distinct radii. Indications are verbatim from Figure~\ref{fig:same_rho}.}
\label{fig:different_rho}
\vspace{-6mm}
\end{figure}

\subsection{Real Data Experiments}
\begin{table*}[!ht]
  \centering
  \caption{Correct classification
rate on the benchmark date sets. \textbf{Bold} number corresponds to the best performance in each dataset.}
  \vskip 0.1in
  \begin{center}
\begin{small}
\begin{sc}
    \begin{tabular}{lccccccc}
    \toprule
    Dataset & GQDA, CV & GQDA, CLT & NPQDA, CV & NPQDA, CLT & KQDA  & RQDA  & SQDA \\
    \midrule
    australian & 85.38 & 84.91 & \textbf{85.84} & 85.03 & 85.38 & 85.61  & 85.37 \\
    banknote   & \textbf{99.83} & 99.33 & 99.3  & \textbf{99.83} & 99.8  & 99.77 & \textbf{99.83} \\
    climate model & \textbf{94.81} & 89.33 & 93.92 & 90.37 & 92.59  & 94.07 & 93.92 \\
    cylinder & 71.04 & 70.81 & \textbf{71.11} & 70.89  & \textbf{71.11} & \textbf{71.11} & 70.67 \\
    diabetic & 75.52 & 73.49 & \textbf{76.09} & 76.30  & 75.47 & 75.00        & 74.32 \\
    fourclass & 77.82 & 79.26 & \textbf{80.28} & 78.98 & 78.38  & 79.07 & 78.84 \\
    haberman & 74.93 & 75.33 & \textbf{75.45} & \textbf{75.45} & 74.94 & 74.80    & 74.16 \\
    heart & 81.76 & 83.09 & 83.09 & 81.91 & 81.76 &  81.91 & \textbf{83.68} \\
    housing & 91.66 & 90.55 & 91.81 & 91.50  & 91.66  & 91.66 & \textbf{91.97} \\
    ilpd  & 68.84 & \textbf{69.52} & 69.25 & 68.15 & 69.18 &  67.94 & \textbf{69.52} \\
    mammographic mass & \textbf{80.39} & 80.00    & 79.90  & 79.61 & 79.95 &  80.05 & 80.24 \\
    \bottomrule
    \end{tabular}%
    \end{sc}
\end{small}
\end{center}
\vskip -0.1in
  \label{tab:acc}%
  \vspace{-5mm}
\end{table*}%

In our experiments, we first compute the nominal mean by empirical average and we use the Ledoit-Wolf covariance estimator \cite{ref:ledoi2004well} to compute a well-conditioned nominal covariance matrix. We experiment two methods of tuning the radii $\rho$ of the ambiguity sets: using cross-validation on training data, or using the quantile of the limiting distribution in Corollary \ref{cor:gaussian}. Specifically, for the second criteria, we choose
\[\rho_c = n_c^{-1} \chi^2_\alpha(d(d+3)/2) \qquad \forall c \in \{0,1\}, \]
where $n_c$ is the number of training samples in class $c$ and $\chi^2_\alpha(d(d+3)/2)$ is the $\alpha$-quantile of the chi-square distribution with $d(d+3)/2$ degrees of freedom. Notice that for large degrees of freedom, the chi-square distribution concentrates around the mean, because a chi-square random variable with $k$ degrees of freedom is the sum of $k$ i.i.d.~$\chi^2(1)$. The optimal asymptotic value of the radius $\rho_c$ is therefore insensitive to the choice of $\alpha$, so we select numerically $\alpha = 0.5$ in our experiments. We tune the threshold to maximize the training accuracy following \eqref{eqn:tune_gamma} after computing the ratio value for each training sample.
The whole procedure is summarized in Algorithm~\ref{alg}. In particular, this algorithm trains the parameters using only one pass over the training samples, which makes it significantly faster than the cross-validation approach. We observe empirically in most cases that the performance of classifying using Algorithm~\ref{alg} is comparable in terms of test accuracy to classifying with cross-validating on the tuning parameters.

\begin{algorithm}[t]
 \caption{Optimistic score ratio classification}
 \begin{algorithmic}[1]
  \STATE \textbf{Input:} datasets $\{\wh x_{c,i}\}_{i=1}^{n_c}$ for $c \in \{0, 1\}$. A test data $x$.
     \STATE Compute the nominal mean and the nominal covariance matrix
  \STATE Compute the radii $\rho_c \gets  n_c^{-1} \chi^2_{0.5}(d(d+3)/2)$.
  \STATE Compute the optimistic ratio $\mc R(\wh x_{c,i})$ for every $\wh x_{c, i}$
  \STATE Compute the threshold $\tau\opt$ that solves \eqref{eqn:tune_gamma}.
  \STATE \textbf{Output:} classification label  $\mathbf{1} \{\mc R(x)\geq\tau\}$.
 \end{algorithmic}
 \label{alg}
\end{algorithm}

We test the performance of our classification rules on various datasets from the UCI repository \cite{Dua:2019}. Specifically, we compare the following methods:
\begin{itemize}[wide]
    \item Gaussian QDA (\textbf{GQDA}) and Nonparametric QDA (\textbf{NPQDA}): Our classifiers $\mc C_{\N}$ and $\mc C_{\text{nonparam}}$;
    \item Kullback-Leibler QDA (\textbf{KQDA}): The classifier based on KL ambiguity sets with fixed mean \cite{ref:nguyen2019calculating};

    \item Regularized QDA (\textbf{RQDA}): The regularized QDA based on the linear shrinkage covariance estimator $\wh \Sigma_c+\rho_c I_d$;
    \item Sparse QDA (\textbf{SQDA}): The sparse QDA based on the graphical lasso covariance estimator \cite{ref:friedman2008sparse} with parameter $\rho_c$.
\end{itemize}
For \textbf{GQDA} and \textbf{NPQDA}, we also compare the performance of different strategies to choose the radii $\rho$ using cross-validation (CV) and selection based on Theorem \ref{thm:asymptotic} (CLT). For all methods that need cross-validation, we randomly select 75\% of the data for training and the remaining 25\%
for testing. The size of the ambiguity sets and the regularization parameter are selected using stratified 5-fold cross-validation. Furthermore, to promote a fair comparison, we tune the threshold for every method using \eqref{eqn:tune_gamma}. The performance of the classifiers is measured by the  average \textit{correct classification
rate} (CCR) on the validation set. The average CCR score
over 10 trials are reported in Table~\ref{tab:acc}.

\onecolumn

\appendix
\renewcommand\thesection{\Alph{section}}
\renewcommand{\theequation}{A.\arabic{equation}}

\section{Proof of Section~\ref{sect:intro}}

\begin{proof}[Proof of Proposition~\ref{prop:optimal-action}]
To ease the exposition, we use the following notational shorthands $\mbb B_c = \mbb B_{\rho_c}(\Pnom_c) \cap \mc P$ for $c \in \{0, 1\}$ and
\[
    f_c^{\max}(x) = \Sup{f_c \in \mbb B_c}~f_c(x), \quad f_c^{\min}(x) = \Inf{f_c \in \mbb B_c}~f_c(x) \qquad \forall c \in \{0, 1\}.
\]
If action $a = 1$ is chosen, then the worst-case probability of mis-classification is
\begin{subequations}
\begin{align}
   \Sup{\PP \in \mc B}~\PP( Y = 0 | X = x) &=
   \left\{
       \begin{array}{cl}
       \sup & \ds \frac{ f_0(x) \pi_0} {f_0(x) \pi_0 + f_1(x) \pi_1} \\
       \st & f_0 \in \mbb B_0,~f_1 \in \mbb B_1
       \end{array}
   \right. \notag \\
   &=
   \Sup{f_0 \in \mbb B_0}~\Sup{f_1 \in \mbb B_1}~\ds \frac{ f_0(x) \pi_0} {f_0(x) \pi_0 + f_1(x) \pi_1} \notag \\
   &= \Sup{f_0 \in \mbb B_0}~\ds \frac{ f_0(x) \pi_0} {f_0(x) \pi_0 + f_1^{\min}(x) \pi_1} \label{eq:2layer-1}\\
    &= \frac{f_0^{\max}(x) \pi_0}{f_0^{\max}(x) \pi_0 + f_1^{\min}(x) \pi_1}, \label{eq:2layer-2}
\end{align}
\end{subequations}
where equality~\eqref{eq:2layer-1} holds because $\pi_1 > 0$, thus for any $f_0 \in \mbb B_0$, the optimal choice of $f_1$ for the inner supremum problem will minimize $f_1(x)$ over all $f_1 \in \mbb B_1$. Equality~\eqref{eq:2layer-2} holds because $f_1^{\min}(x) \pi_1 > 0$, thus it is optimal to choose $f_0$ that maximizes $f_0(x)$ over all $f_0 \in \mbb B_0$. Using similar lines of arguments, if action $a = 0$ is chosen, then the worst-case probability of mis-classification is
\begin{align*}
   \Sup{\PP \in \mc B}~\PP( Y = 1 | X = x)  &=
   \left\{
   \begin{array}{cl}
   \sup & \ds \frac{ f_1(x) \pi_1} {f_0(x) \pi_0 + f_1(x) \pi_1} \\
   \st & f_0 \in \mbb B_0,~f_1 \in \mbb B_1
   \end{array}
   \right. \\
   &= \frac{f_1^{\max}(x) \pi_1}{f_0^{\min}(x) \pi_0 + f_1^{\max}(x) \pi_1}.
\end{align*}
Thus, by comparing the two values of the worst-case probability, action $a=1$ is optimal whenever
\[
    \frac{f_0^{\max}(x) \pi_0}{f_0^{\max}(x) \pi_0 + f_1^{\min}(x) \pi_1} \leq \frac{f_1^{\max}(x) \pi_1}{f_0^{\min}(x) \pi_0 + f_1^{\max}(x) \pi_1},
\]
which in turn is equivalent to the condition
\begin{equation} \label{eq:ratio-original}
\frac{f_1^{\max}(x)}{{f_0^{\max}(x)}} \geq \frac{f_0^{\min}(x)\pi_0^2}{{f_1^{\min}(x)\pi_1^2}}.
\end{equation}
By setting the right-hand side of~\eqref{eq:ratio-original} to a threshold $\tau(x) \in \R_+$, we arrive at the postulated result.
\end{proof}
As the proof reveals in~\eqref{eq:ratio-original}, the optimal threshold $\tau(x)$ in the statement of Proposition~\ref{prop:optimal-action} admits an explicit expression
\[
    \tau(x) =\frac{f_0^{\min}(x)\pi_0^2}{{f_1^{\min}(x)\pi_1^2}}.
\]
This value $\tau(x)$ can be found by evaluating the minimum likelihood values $f^{\min}_0(x)$ and $f^{\min}_0(x)$. Unfortunately, it remains intractable to find the exact values of $f^{\min}_0(x)$ and $f^{\min}_1(x)$. To demonstrate this fact, we consider the Gaussian parametric setting as in Section~\ref{sect:param}, and evaluating the minimum likelihood in this case is equivalent to solving
\be \label{eq:pess}
\begin{array}{cl}
\min & ~ -(\m - x)^\top \cov^{-1} (\m - x) - \log\det \cov \\
\st& \m \in \R^d,~  \cov \in \PD^d \\
&(\m - \msa)^\top \cov^{-1} (\m - \msa) + \Tr{\covsa \cov^{-1}} - \log\det \covsa \cov^{-1} - d \leq  \rho
\end{array}%
\ee
for some $\msa \in \R^d$, $\covsa \in \PD^d$ and $\rho \ge 0$. Problem~\eqref{eq:pess} is the minimization counterpart of the maximization problem~\eqref{eq:gauss:refor0}, it is also non-convex, however, we are not aware of any tractable approach to solve~\eqref{eq:pess}.

\section{Proofs of Section~\protect\ref{sect:ambiguity}}

\begin{proof}[Proof of Theorem~\protect\ref{thm:asymptotic}]
Throughout this proof, we use $\xrightarrow{dist.}$ and $\xrightarrow{p.}$
to denote the convergence in distribution and in probability, respectively.
For $n$ sufficiently big, $\covsa_{n}$ defined as in~%
\eqref{eq:sample-average} is invertible with probability 1. In this case, we
find
\begin{align}
\mathds D\big((\msa_{n},\covsa_{n})\parallel (m,S)\big)& =\Tr{\covsa_n
S^{-1}}+(m-\msa_{n})^{\top }S^{-1}(m-\msa_{n})-d-\log \det (\covsa_{n}S^{-1})
\notag \\
& =\frac{1}{n}\sum_{t=1}^{n} \wh \eta_{t}^{\top } \wh \eta_{t}-d-\log \det
(S^{-\half}\covsa_{n}S^{-\half}),  \label{eq:D_re}
\end{align}%
where $\wh \eta_{t}=S^{-\half}(\wh \xi_{t}-m)$ is the isotropic
transformation of $\wh \xi_{t}$ for each $t=1,\ldots ,n$. Furthermore,
denote by $\bar{\mu}_{n}$ the sample average of $\wh \eta_1, \ldots, \wh %
\eta_n$ defined as
\begin{equation*}
\bar{\mu}_{n}=\frac{1}{n}\sum_{t=1}^{n} \wh \eta_{t}=S^{-\half}(\msa_{n}-m).
\end{equation*}%
%
%
%
%
%
%
%
%
By adding and subtracting $\log\det \left(n^{-1} \sum_{t=1}^{n} \wh \eta
_{t} \wh \eta _{t}^{\top } \right)$ into~\eqref{eq:D_re}, we have
\begin{align*}
&\mathds D\big((\msa_{n},\covsa_{n})\parallel (m,S)\big) = \\
&\underbrace{%
\left(\log \det \big(\frac{1}{n} \sum_{t=1}^{n} \wh \eta _{t} \wh \eta
_{t}^{\top } \big) -\log \det (S^{-\half}\covsa_{n} S^{-\half})\right)}_%
\text{(A)} + \underbrace{\left(\frac{1}{n}\sum_{t=1}^{n} \wh \eta_{t}^{\top
} \wh \eta_{t} - d - \log \det \big(\frac{1}{n} \sum_{t=1}^{n} \wh \eta_{t} %
\wh \eta_{t}^{\top } \big)\right)}_\text{(B)}.
\end{align*}
We analyze the 2 terms (A) and (B) separately. First, rewrite
\begin{align*}
S^{-\half}\covsa_{n}S^{-\half}& =S^{-\half}\Big(\frac{1}{n}\sum_{t=1}^{n}(%
\wh \xi_{t}-\msa_{n})(\wh \xi_{t}-\msa_{n})^{\top }\Big)S^{-\half} \\
& =S^{-\half}\big(\frac{1}{n}\sum_{t=1}^{n}(\wh \xi_{t}-m+m-\msa_{n})(\wh %
\xi_{t}-m+m-\msa_{n})^{\top }\big)S^{-\half} \\
& =\frac{1}{n}\sum_{t=1}^{n}\big(\wh \eta_{t} \wh \eta_{t}^{\top }+S^{-\half%
}(m-\msa_{n}) \wh \eta_{t}^{\top } + \wh \eta_{t}(m-\msa_{n})^{\top }S^{-%
\half} + S^{-\half}(m-\msa_{n})(m-\msa_{n})^{\top }S^{-\half}\big) \\
& =\frac{1}{n}\sum_{t=1}^{n}\big(\wh \eta_{t} \wh \eta_{t}^{\top }-\bar{\mu}%
_{n} \wh \eta_{t}^{\top } - \wh \eta_{t}\bar{\mu}_{n}^{\top }+\bar{\mu}_{n}%
\bar{\mu}_{n}^{\top }\big) \\
& = \left(\frac{1}{n}\sum_{t=1}^{n} \wh \eta_{t} \wh \eta_{t}^{\top }\right)-%
\bar{\mu}_{n}\bar{\mu}_{n}^{\top }.
\end{align*}%
%
%
%
%
%
%
%
%
Then, (A) becomes
\begin{align*}
\log \det \big(\frac{1}{n} \sum_{t=1}^{n} \wh \eta_{t} \wh \eta_{t}^{\top } %
\big) -\log \det (S^{-\half}\covsa_{n}S^{-\half})= -\log \det \left(I_{d}
-\left(\frac{1}{n}\sum_{t=1}^{n} \wh \eta_{t} \wh \eta_{t}^{\top
}\right)^{-1} \left(\bar{\mu}_{n}\bar{\mu}_{n}^{\top } \right)\right).
\end{align*}
By the weak law of large numbers, as $n\uparrow \infty $, we find
\begin{equation*}
\frac{1}{n} \sum_{t=1}^{n} \wh \eta_{t} \wh \eta_{t}^{\top }\xrightarrow{p.}%
I_{d}\quad \text{and} \quad \frac{1}{n}\sum_{t=1}^{n} \wh \eta_{t}%
\xrightarrow{p.}0.
\end{equation*}%
By the central limit theorem, we find as $n\uparrow \infty $
\begin{equation*}
\sqrt{n}\bar{\mu}_{n}\xrightarrow{dist.} H ,\quad \sqrt{n}\left( \frac{1}{n}%
\sum_{t=1}^{n} \wh \eta_{t} \wh \eta_{t}^{\top }-I_{d}\right) %
\xrightarrow{dist.}Z,
\end{equation*}%
where the random vector $H$ and the random matrix $Z$ are defined as in the
statement of the theorem.
By Slutsky's theorem~\citep[Theorem~2.8]{ref:vaart1998asymptotic}, we find
\begin{equation*}
n\left(\frac{1}{n}\sum_{t=1}^{n} \wh \eta _{t} \wh \eta _{t}^{\top
}\right)^{-1} \left(\bar{\mu}_{n}\bar{\mu}_{n}^{\top } \right) %
\xrightarrow{dist.} HH^{\top}.
\end{equation*}
By the delta method~\citep[Theorem~3.1]{ref:vaart1998asymptotic}, we have
\begin{equation}
n \times \Big( \underbrace{\log \det \big(\frac{1}{n} \sum_{t=1}^{n} \wh %
\eta_{t} \wh \eta_{t}^{\top } \big) -\log \det (S^{-\half}\covsa _{n}S^{-%
\half})}_\text{(A)}\Big) \xrightarrow{dist.} \Tr{HH^{\top}} = H^{\top}H.
\label{eq:(1)}
\end{equation}
Now, we are ready to analyze (B). Using a Taylor expansion for the
log-determinant function around $I_d$, we find
\begin{align*}
&\log \det \big(\frac{1}{n} \sum_{t=1}^{n} \wh \eta _{t} \wh \eta_{t}^{\top
} \big) \\
= &\log\det(I_d) + \text{Tr}\left[\left( \frac{1}{n}\sum_{t=1}^{n} \wh \eta
_{t} \wh \eta_{t}^{\top }-I_{d}\right) \right]- \frac{1}{2} \text{Tr}\left[%
\left( \frac{1}{n}\sum_{t=1}^{n} \wh \eta_{t} \wh \eta_{t}^{\top
}-I_{d}\right)^2 \right] + o\left(\text{Tr}\left[\left( \frac{1}{n}%
\sum_{t=1}^{n} \wh \eta_{t} \wh \eta_{t}^{\top } - I_{d}\right)^2\right]
\right).
\end{align*}
Therefore, by the second-order delta method, we have
\begin{equation}
n\times\Big( \underbrace{\frac{1}{n}\sum_{t=1}^{n} \wh \eta_{t}^{\top } \wh %
\eta_{t}-d-\log \det \big(\frac{1}{n} \sum_{t=1}^{n} \wh \eta_{t} \wh %
\eta_{t}^{\top } \big)}_{\text{(B)}}\Big) \xrightarrow{dist.} \frac{1}{2}
\text{Tr}\left[Z^2 \right].  \label{eq:(2)}
\end{equation}
Finally, by combining the limits from~\eqref{eq:(1)} and \eqref{eq:(2)}, we
obtain the postulated result. 
\end{proof}

\begin{proof}[Proof of Corollary~\protect\ref{cor:gaussian}]
From Theorem \ref{thm:asymptotic}, we have as $n\uparrow \infty $
\begin{equation*}
n\times \frac{1}{2} \KL\big(\mc N(\msa_{n},\covsa_{n}) \parallel \mc N (m, S) \big) =
n\times \mathds D\big((\msa_{n},\covsa_{n})\parallel (m,S)\big)\rightarrow
H^{\top }H+\frac{1}{2}\Tr{Z^2}\quad \text{in distribution,}
\end{equation*}%
where the random vector $H$ and the random matrix $Z$ are defined as in the
statement of Theorem~\ref{thm:asymptotic}. In the Gaussian setting, the
elements of the isotropic random vector $\eta$ are i.i.d.~standard
univariate normal random variables. Therefore, we have%
\begin{equation*}
\mathrm{cov}(Z_{jk},Z_{j^{\prime }k^{\prime }})=
\begin{cases}
\EE_{\PP}\left[ \left( \eta_j\right) ^{4}\right] -1 & \text{if }
j=k=j^{\prime }=k^{\prime }, \\
1 & \text{if } j<k,\left( j=j^{\prime },k=k^{\prime }\text{ or }j=k^{\prime
},j^{\prime }=k\right), \\
0 & \text{otherwise.}%
\end{cases}%
\end{equation*}%
Recall that $\EE_{\PP}\left[ \left( \eta_j\right) ^{4}\right] =3,$ which
gives $\mathrm{cov}(Z_{jj},Z_{jj})=2.$ Hence, $\frac{1}{2}\Tr{Z^2}$
follows $\chi ^{2}\left( d(d+1)/2\right) $ and $H^{\top }H$ follows $\chi
^{2}\left( d\right) $. Finally, since $H$ and $Z$ are independent in the
Gaussian case, we have $H^{\top }H + \frac{1}{2}\Tr{Z^2}$ follows $\chi
^{2}\left( d\right) +\chi ^{2}\left( d(d+1)/2\right) =\chi ^{2}\left(
d(d+3)/2\right)$.
\end{proof}


\section{Proofs of Section~\protect\ref{sect:nonparam}}

We first prove the compactness property of the uncertainty set $\mc U_\rho(\msa, \covsa)$.

\begin{lemma}[Compactness of $\mc U_\rho(\msa, \covsa)$] \label{lemma:U-compact}
For any $\msa \in \R^d$, $\covsa \in \PD^d$ and $\rho \in \R_+$, the set $\mc U_\rho(\msa, \covsa)$ written as
\begin{align*}
	\mc U_\rho(\msa, \covsa) = \{ (\m, \cov) \in \R^d \times \PD^d:  \Tr{\covsa \cov^{-1}} - \log\det (\covsa \cov^{-1}) - d + (\m - \msa)^\top \cov^{-1} (\m - \msa) \le \rho \}
\end{align*}
is compact.
\end{lemma}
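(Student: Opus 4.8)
The plan is to show that $\mc U_\rho(\msa,\covsa)$ is a closed and bounded subset of the finite-dimensional space $\R^d\times\Sym^d$ that moreover stays uniformly away from the boundary of the cone $\PD^d$; compactness then follows from Heine--Borel. The key device is to diagonalize the pencil $(\covsa,\cov)$: let $\lambda_1,\dots,\lambda_d>0$ denote the eigenvalues of $\covsa^{\half}\cov^{-1}\covsa^{\half}$ (equivalently the generalized eigenvalues of $\covsa$ relative to $\cov$), so that $\Tr{\covsa\cov^{-1}}=\sum_i\lambda_i$ and $\log\det(\covsa\cov^{-1})=\sum_i\log\lambda_i$. Writing $\phi(t)\Let t-\log t-1$, the divergence splits as
\[
    \mathds D\big((\msa,\covsa)\parallel(\m,\cov)\big) = \sum_{i=1}^d \phi(\lambda_i) + (\m-\msa)^\top\cov^{-1}(\m-\msa),
\]
a sum of two nonnegative terms, since $\phi\ge 0$ (with equality only at $t=1$) and $\cov^{-1}\succ 0$.

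Next I would read off quantitative bounds from the constraint $\mathds D\le\rho$. Nonnegativity of both summands forces $\phi(\lambda_i)\le\rho$ for every $i$ and $(\m-\msa)^\top\cov^{-1}(\m-\msa)\le\rho$. Because $\phi$ is strictly convex with $\phi(t)\to\infty$ both as $t\downarrow 0$ and as $t\uparrow\infty$, the sublevel set $\{t>0:\phi(t)\le\rho\}$ is a compact interval $[a,b]\subset(0,\infty)$ depending only on $\rho$. Hence $a I_d\preceq\covsa^{\half}\cov^{-1}\covsa^{\half}\preceq b I_d$, which, conjugating by $\covsa^{-\half}$ and inverting, rearranges to $\tfrac1b\covsa\preceq\cov\preceq\tfrac1a\covsa$. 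In particular $\cov$ then ranges over a compact subset of $\PD^d$, and $\cov^{-1}\succeq a\covsa^{-1}\succeq c\,I_d$ with $c\Let a/\lambda_{\max}(\covsa)>0$. Plugging this into the Mahalanobis bound gives $c\|\m-\msa\|^2\le(\m-\msa)^\top\cov^{-1}(\m-\msa)\le\rho$, so $\|\m-\msa\|\le\sqrt{\rho/c}$. Thus $\mc U_\rho(\msa,\covsa)\subseteq K$, where $K\Let\{(\m,\cov)\in\R^d\times\Sym^d:\|\m-\msa\|\le\sqrt{\rho/c},\ \tfrac1b\covsa\preceq\cov\preceq\tfrac1a\covsa\}$ is closed and bounded in $\R^d\times\Sym^d$, hence compact, and is entirely contained in $\PD^d$.

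Finally I would upgrade this to compactness of $\mc U_\rho(\msa,\covsa)$ itself by verifying closedness. The map $(\m,\cov)\mapsto\mathds D\big((\msa,\covsa)\parallel(\m,\cov)\big)$ is continuous on $\R^d\times\PD^d$ because matrix inversion and $\log\det$ are continuous on the open cone $\PD^d$; in particular it is continuous on $K$. Since $\mc U_\rho(\msa,\covsa)\subseteq K$, we have $\mc U_\rho(\msa,\covsa)=\{(\m,\cov)\in K:\mathds D((\msa,\covsa)\parallel(\m,\cov))\le\rho\}$, the preimage of $(-\infty,\rho]$ under a continuous function on the compact set $K$, hence closed in $K$ and therefore compact. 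The only point worth flagging is that $\PD^d$ is not closed in $\Sym^d$, so mere boundedness would not give compactness; it is precisely the logarithmic barrier $\phi(t)=t-\log t-1\to\infty$ as $t\downarrow 0$ (and as $t\uparrow\infty$) that pins the eigenvalues of $\cov$ inside a fixed compact subinterval of $(0,\infty)$ and prevents the set from accumulating at a singular matrix. Beyond assembling these pieces carefully there is no substantive obstacle.
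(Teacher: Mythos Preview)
Your proof is correct and follows essentially the same route as the paper: both exploit the eigenvalues of $\covsa^{\half}\cov^{-1}\covsa^{\half}$ together with the barrier function $\phi(t)=t-\log t-1$ to trap $\cov$ in a compact subset of $\PD^d$, then bound $\m$ via the Mahalanobis term and conclude closedness by continuity of $\mathds D$. Your presentation is somewhat crisper---you read off the two-sided eigenvalue bound directly from the sublevel set $\phi^{-1}([0,\rho])=[a,b]$, whereas the paper argues by contradiction along a sequence---but the substance is identical.
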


\begin{proof}[Proof of Lemma~\protect\ref{lemma:U-compact}]
If $\rho = 0$ then $\mc U_\rho(\msa, \covsa)$ is a singleton $\{(\msa, \covsa%
)\}$ and the claim holds trivially. For the rest of the proof, we consider
when $\rho > 0$. Pick an arbitrary $(\m, \cov) \in \U_\rho(\msa, \covsa)$,
it is obvious that $\cov$ should satisfy
\begin{equation*}
\Tr{\covsa^\half \cov^{-1} \covsa^\half} - \log\det (\covsa^\half \cov^{-1} %
\covsa^\half) - d \leq \rho,
\end{equation*}
which implies that $\cov$ is bounded. To see this, suppose that $\{\cov%
_{k}\}_{k \in \mbb N}$ is a sequence of positive definite matrices and $%
\{\sigma_k\}_{k \in \mbb N}$ is the corresponding sequence of the minimum
eigenvalues of $\{\covsa^{-\half} \cov_k^{-1} \covsa^{-\half}\}_{k \in \mbb %
N}$. Because the function $\sigma \mapsto \sigma - \log \sigma - 1$ is
non-negative for every $\sigma > 0$, we find
\begin{equation*}
\Tr{\covsa^\half \cov_k^{-1} \covsa^\half} - \log\det (\covsa^\half \cov%
_k^{-1} \covsa^\half) - d \geq \sigma_k - \log \sigma_k -1.
\end{equation*}
If $\cov_k$ tends to infinity, then $\sigma_k$ tends to 0, and
in this case $\sigma_k - \log \sigma_k -1 \rightarrow +\infty$. This implies
that $\cov$ should be bounded in the sense that $\cov \preceq \bar \sigma
I_d $ for some finite positive constant $\bar \sigma$. Using an analogous
argument, we can show that $\cov$ is lower bounded in the sense that $\cov %
\succeq \underline{\sigma} I_d$ for some finite positive constant $%
\underline{\sigma}$. As a consequence, $\m$ is also bounded because $\m$
should satisfy $\underline{\sigma} \|\m - \msa \|_2^2 \leq \rho$. We now can
rewrite $\mc U_\rho(\msa, \covsa)$ as
\begin{equation*}
\mc U_\rho(\msa, \covsa) = \{ (\m, \cov) \in \R^d \times \PD^d: \underline{%
\sigma} \|\m - \msa \|_2^2 \leq \rho,~\underline{\sigma} I_d \preceq \cov %
\preceq \bar{\sigma} I_d,~\mathds D \big( (\msa, \covsa) \parallel (\m, \cov%
) \big) \leq \rho \},
\end{equation*}
which implies that $\mc U_\rho(\msa, \covsa)$ is a closed set because $%
\mathds D \big( (\msa, \covsa) \parallel (\cdot, \cdot) \big)$ is a
continuous function over $(\m, \cov)$ when $\cov$ ranges over $\underline{%
\sigma} I_d \preceq \cov \preceq \bar{\sigma} I_d$. This observation coupled
with the boundedness of $(\m, \cov)$ established previously completes the
proof.
\end{proof}

For a fixed $\msa \in \R^d$, $x \in \R^d$ and $\eps \in \R_+$, define the
following function $g : \PD^d \to \R_+$ as \be \label{eq:g-def} g(\Omega) %
\Let \left\{
\begin{array}{cl}
\min & (\m - x)^\top \Omega (\m - x) \\
\st & \m \in \R^d,~(\m - \msa)^\top \Omega (\m - \msa) \leq \eps,%
\end{array}
\right. \ee
where the dependence of $g$ on $\msa$, $x$ and $\eps$ has been made implicit
to avoid clutter. The objective function of problem~\eqref{eq:g-def} is continuous in $\m$ and the feasible set of problem~\eqref{eq:g-def} is compact because $\Omega \in \PD^d$, which justify the minimization operator of problem~\eqref{eq:g-def}. The next lemma asserts that the value $g(\Omega)$ coincides with the
optimal value of a univariate convex optimization problem.
\begin{lemma}[Reformulation of $g$] \label{lemma:g-refor}
    For any $\msa \in \R^d$, $x \in \R^d$, $\eps \in \R_+$ and $\Omega \in \PD^d$, the value $g(\Omega)$ coincides with the optimal value of the univariate convex optimization problem
    \be \label{eq:g-refor}
     \Max{\lambda \ge 0} \left\{ \frac{\lambda}{1+\lambda} (x - \msa)^\top \Omega (x -\msa) - \lambda \eps \right\}.
    \ee
    Moreover, denote by $\lambda\opt$ the unique optimal solution of the maximization problem~\eqref{eq:g-refor}, then the unique minimizer $\m\opt$ of problem~\eqref{eq:g-def} is $\m\opt = (x + \lambda\opt \msa)/(1 + \lambda\opt)$. Furthermore, we have
    \[
        \left\{
        \begin{array}{ll}
            \lambda\opt = 0,~g(\Omega) = 0 & \text{if } \eps \ge (x - \msa)^\top \Omega (x - \msa), \\
            \lambda\opt = \big(\sqrt{\eps (x-\msa)^\top \Omega (x - \msa)} - \eps\big)/\eps,~g(\Omega) = \big(\sqrt{\eps} - \sqrt{(x - \msa)^\top \Omega (x - \msa)} \big)^2 & \text{otherwise.}
        \end{array}
        \right.
    \]
\end{lemma}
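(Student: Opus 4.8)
The plan is to read \eqref{eq:g-def} as a convex quadratically constrained quadratic program with a single constraint, for which Lagrangian duality is lossless, and then carry out the resulting one-dimensional dual computation explicitly. Assume first that $\eps > 0$, so $\m = \msa$ is strictly feasible and Slater's condition holds; then
\[ g(\Omega) \;=\; \Max{\lambda \ge 0}~\Min{\m \in \R^d}~\big\{ (\m - x)^\top \Omega (\m - x) + \lambda \big( (\m - \msa)^\top \Omega (\m - \msa) - \eps \big) \big\}, \]
with the outer maximum attained. The boundary case $\eps = 0$ collapses the feasible set of \eqref{eq:g-def} to the singleton $\{\msa\}$ and is disposed of directly (or recovered as a limit $\eps \downarrow 0$).

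Next I would evaluate the inner minimization. Since $\Omega \in \PD^d$, the Lagrangian is strictly convex in $\m$, and its unique stationary point satisfies $(\m - x) + \lambda(\m - \msa) = 0$, i.e.\ $\m(\lambda) = (x + \lambda \msa)/(1 + \lambda)$. Substituting $\m(\lambda) - x = \tfrac{\lambda}{1+\lambda}(\msa - x)$ and $\m(\lambda) - \msa = \tfrac{1}{1+\lambda}(x - \msa)$, and combining the two quadratic terms via $\tfrac{\lambda^2}{(1+\lambda)^2} + \tfrac{\lambda}{(1+\lambda)^2} = \tfrac{\lambda}{1+\lambda}$, reduces the dual objective to $\tfrac{\lambda}{1+\lambda}(x - \msa)^\top \Omega (x - \msa) - \lambda \eps$, which is precisely \eqref{eq:g-refor}. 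Because $\lambda \mapsto \tfrac{\lambda}{1+\lambda} = 1 - (1+\lambda)^{-1}$ is concave and increasing on $\R_+$ and $(x - \msa)^\top \Omega (x - \msa) \ge 0$, the objective of \eqref{eq:g-refor} is concave in $\lambda$, so \eqref{eq:g-refor} is indeed a univariate convex program.

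Then I would solve \eqref{eq:g-refor} in closed form. Write $a \Let (x - \msa)^\top \Omega (x - \msa) \ge 0$ and $h(\lambda) \Let \tfrac{\lambda}{1+\lambda}a - \lambda\eps$; its derivative $h'(\lambda) = a(1+\lambda)^{-2} - \eps$ is continuous and, when $a > 0$, strictly decreasing. If $a \le \eps$, then $h'(0) \le 0$, hence $h$ is nonincreasing on $\R_+$, giving $\lambda\opt = 0$ and $g(\Omega) = h(0) = 0$; this is consistent with \eqref{eq:g-def} because $\m(0) = x$ is feasible exactly when $a \le \eps$. If $a > \eps$, then $h'$ has a unique zero $\lambda\opt = \sqrt{a/\eps} - 1 = (\sqrt{\eps a} - \eps)/\eps > 0$, and substituting $1 + \lambda\opt = \sqrt{a/\eps}$ into $h$ yields $g(\Omega) = a - 2\sqrt{a\eps} + \eps = (\sqrt{a} - \sqrt{\eps})^2$, as claimed.

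Finally I would record uniqueness and the formula for $\m\opt$. The primal objective $\m \mapsto (\m - x)^\top \Omega (\m - x)$ is strictly convex over the compact convex feasible set of \eqref{eq:g-def}, so its minimizer $\m\opt$ is unique; by strong duality $\m\opt$ minimizes the Lagrangian at $\lambda = \lambda\opt$, whence $\m\opt = \m(\lambda\opt) = (x + \lambda\opt\msa)/(1+\lambda\opt)$. Uniqueness of $\lambda\opt$ follows from strict concavity of $h$ when $a > 0$ and from $h(\lambda) = -\lambda\eps$ (with optimum at $0$) when $a = 0$. I do not anticipate a genuine obstacle: the argument is a routine one-constraint QCQP duality computation, so the care lies in the dichotomy $a \le \eps$ versus $a > \eps$, the algebraic simplification to $(\sqrt{a} - \sqrt{\eps})^2$, and checking the degenerate corners $\eps = 0$ and $x = \msa$.
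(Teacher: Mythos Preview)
Your proposal is correct and follows essentially the same route as the paper: Lagrangian duality for the single-constraint convex QCQP, explicit inner minimization yielding $\m(\lambda)=(x+\lambda\msa)/(1+\lambda)$, and explicit solution of the resulting one-dimensional dual. The only cosmetic differences are that the paper first substitutes $y=\m-\msa$, $w=x-\msa$ before dualizing and then reads off $\lambda\opt$ from the KKT system rather than from the sign of $h'$, whereas you work directly in $\m$ and analyze $h'$; neither change affects the argument.
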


\begin{proof}[Proof of Lemma~\protect\ref{lemma:g-refor}]
Using a change of variables $y \leftarrow \m - \msa$ and a change of
parameters $w \leftarrow x - \msa$, problem~\eqref{eq:g-def} can be recast
in the following equivalent form \be \label{eq:g-primal} g(\Omega) = \left\{
\begin{array}{cl}
\min & (y - w)^\top \Omega (y - w) \\
\st & y \in \R^d,~y^\top \Omega y \leq \eps,%
\end{array}
\right. \ee
which is a convex optimization problem. Assume momentarily that $\eps > 0$.
By invoking a duality argument, we find
\begin{subequations}
\begin{align}
g(\Omega) &= \Min{y} \Max{\lambda \ge 0}~ (y - w)^\top \Omega (y - w) +
\lambda \big( y^\top \Omega y - \eps \big)  \notag \\
&=\Max{\lambda \ge 0} ~ w^\top \Omega w -\lambda \eps + \Min{y}~\{
(1+\lambda) y^\top \Omega y - 2y^\top \Omega w \}  \label{eq:g-refor-1} \\
&= \Max{\lambda \ge 0} ~ \frac{\lambda}{1 + \lambda} w^\top \Omega w
-\lambda \eps ,  \label{eq:g-refor-2}
\end{align}
where the interchanging of the inf-sup operators are justified because the
feasible set of the primal problem~\eqref{eq:g-primal} is non-empty and
compact~\citep[Proposition~5.5.4]{ref:bertsekas2009convex}. For any $\lambda
\ge 0$, the minimizer of the inner minimization problem in~%
\eqref{eq:g-refor-1} is
\end{subequations}
\begin{equation*}
y\opt(\lambda) = \frac{w}{1 + \lambda}.
\end{equation*}
Furthermore, this minimizer $y\opt(\lambda)$ is unique for any $\lambda \ge
0 $ because the objective function of the inner minimization over $y$ in~%
\eqref{eq:g-refor-1} is strictly convex in $y$. Substituting this optimal
solution into the objective of~\eqref{eq:g-refor-1} leads to~%
\eqref{eq:g-refor-2}, and substituting the value of $w$ by $x - \msa$ leads
to the reformulation~\eqref{eq:g-primal}.

We now study the maximizer $\lambda\opt$ of problem~\eqref{eq:g-refor-2}.
The Karush-Kuhn-Tucker condition asserts that there exists $\dualvar\opt \in %
\R_+$ such that $(\lambda\opt, \dualvar\opt)$ satisfy the system of algebraic equations
\begin{equation*}
\left\{
\begin{array}{rcl}
(1+\lambda\opt)^{-2} w^\top \Omega w - \dualvar\opt & = & \eps \\
\dualvar\opt \lambda\opt & = & 0 \\
\dualvar\opt~\ge~ 0,~ \lambda\opt & \ge & 0.%
\end{array}
\right.
\end{equation*}
If $w^\top \Omega w \leq \eps$, then $\lambda\opt = 0$. If $w^\top \Omega w
> \eps$, then
\[
\lambda\opt = \sqrt{\frac{w^\top \Omega w}{\eps}} - 1.
\]
In both cases, $\lambda\opt$ is unique. Substituting the value of $\lambda%
\opt$ into the objective function of~\eqref{eq:g-refor-2} gives the
analytical expression for $g(\Omega)$.

We note that when $\eps = 0$, we have $g(\Omega) = (x - \msa)^\top \Omega (x
- \msa)$. The expressions for $\lambda\opt$ remain still valid in this case
by taking the limit as $\eps \downarrow 0$. Finally, the uniqueness of $\m%
\opt$ follows from the uniqueness of $\lambda\opt$ and $y\opt(\lambda)$
obtained previously. The proof is thus completed.
\end{proof}

We are now ready to prove Theorem~\ref{thm:nonparam} in the main text.

\begin{proof}[Proof of Theorem~\ref{thm:nonparam}]
The optimistic nonparametric score evaluation problem can be decomposed using a two-layer formulation~\eqref{eq:two-layer} as
\begin{equation*}
\Sup{\QQ \in \mbb B_\rho(\Pnom)}~\QQ(\{x\}) = \Sup{(\m, \cov) \in \mc
U_\rho(\msa, \covsa)} \Sup{\QQ \in \mc M(\m, \cov)}~\QQ(\{x\}).
\end{equation*}
Using the result from~\citet{ref:marshall1960multivariate} or~\citet[Theorem~6.1]{ref:bertsimas2005optimal} to reformulate the inner supremum
problem, we have
\begin{equation*}
\Sup{\QQ \in \mc M(\m, \cov)}~\QQ(\{x\}) = \frac{1}{1+ (\m - x)^\top \cov%
^{-1} (\m - x)},
\end{equation*}
where the supremum is attained thanks to~\citet[Theorem~6.2]{ref:bertsimas2005optimal} because the set $\{x\}$ is a singleton, and hence
it is closed. This establishes equality~\eqref{eq:non-param:refor0}, where
the maximization operator in the right hand side of~\eqref{eq:non-param:refor0} is justified because~$\mc %
U_\rho(\msa, \covsa)$ is compact by Lemma~\ref{lemma:U-compact} and the
objective function is continuous over $\mc U_\rho(\msa, \covsa)$.

It remains to find the optimal solution $(\m\opt, \cov\opt)$ that solves the
maximization problem~\eqref{eq:non-param:refor0}. If $x = \msa$ then the
optimal value of problem~\eqref{eq:non-param:refor0} is trivially 0. It
suffices to consider the case when $x \neq \msa$. Define $\overline\rho \Let %
\rho + d + \log\det \covsa$. Using a reparametrization $\Omega \leftarrow %
\cov^{-1}$, the maximizer $(\m\opt, \cov\opt)$ also solves
\begin{equation}  \label{eq:non-param:1}
\begin{array}{cl}
\min & (\m - x)^\top \Omega (\m - x) \\
\st & \m \in \R^d,~\Omega \in \PD^d \\
& (\m - \msa)^\top \Omega (\m - \msa) + \Tr{\covsa \Omega} - \log\det \Omega
\leq \overline\rho.%
\end{array}%
\end{equation}
This optimization problem with decision variables $(\m, \Omega)$ is still a
non-convex optimization problem because of the multiplication terms between $%
\m$ and $\Omega$. However, it can be re-expressed as
\begin{equation*}
\begin{array}{cl}
\min & \min ~ (\m - x)^\top \Omega (\m - x) \\
& \st ~~~ \m \in \R^d,~(\m - \msa)^\top \Omega (\m - \msa) \leq \overline\rho - %
\Tr{\covsa \Omega} + \log\det \Omega \\
\st & \Omega \in \PD^d,~\Tr{\covsa \Omega} - \log\det \Omega \leq \overline\rho,%
\end{array}%
\end{equation*}
where we note that the constraint $\Tr{\covsa \Omega} - \log\det \Omega \leq
\overline\rho$ is redundant, but it is added to ensure that the inner problem
over $\mu$ is feasible for any feasible value of $\Omega$ in the outer
problem. Applying Lemma~\ref{lemma:g-refor} to solve the inner problem over $%
\m$ for any given $\Omega \in \PD^d$, problem~\eqref{eq:non-param:1} is
equivalent to
\begin{equation*}
\begin{array}{cl}
\min & \Max{\lambda \ge 0}~ -\lambda (\overline\rho - \Tr{\covsa\Omega} +
\log\det \Omega) + \frac{\lambda}{1+\lambda} (x - \msa)^\top \Omega (x - \msa%
) \\
\st & \Omega \in \PD^d,~\Tr{\covsa \Omega} - \log\det \Omega \leq \overline\rho.%
\end{array}
\end{equation*}
For any $\covsa \in \PD^d$ and $\overline\rho =\rho + d + \log\det \covsa \in \R$%
, the feasible set $\{ \Omega \in \PD^d: \Tr{\covsa \Omega} - \log\det
\Omega \leq \overline\rho\}$ is compact\footnote{
Compactness follows from a reasoning similar to the proof of Lemma~\ref{lemma:U-compact}, thus the details are omitted.} and convex. Moreover, the objective function is convex in $\Omega$ and concave in $\lambda$. Applying
Sion's minimax theorem~\cite{ref:sion1958minimax}, we can interchange the
operators and obtain an equivalent problem
\begin{equation*}
\begin{array}{ccl}
\Max{\lambda \ge 0} & \min & ~ -\lambda (\overline\rho - \Tr{\covsa\Omega} +
\log\det \Omega) + \frac{\lambda}{1+\lambda} (x - \msa)^\top \Omega (x - \msa%
) \\
& \st & \Omega \in \PD^d~,~\Tr{\covsa \Omega} - \log\det \Omega \leq \overline\rho.%
\end{array}%
\end{equation*}
For any $\lambda \ge 0$, we can use a duality argument to reformulate the
inner minimization, and we obtain the equivalent problem
\begin{align*}
& \Max{\lambda \ge 0} \Inf{\Omega \in \PD^d} \Max{\nu \ge 0}~-(\lambda +
\nu) \overline\rho - (\lambda + \nu) \log\det \Omega + (\lambda + \nu)\Tr{\Omega
\covsa} + \frac{\lambda}{1+\lambda} (x - \msa)^\top \Omega(x - \msa) \\
=& \Max{\substack{\lambda \ge 0 \\ \nu \ge 0 }} \Inf{\Omega \in \PD^d}
-(\lambda + \nu) \overline\rho - (\lambda + \nu) \log\det \Omega + (\lambda + \nu)%
\Tr{\Omega \covsa} + \frac{\lambda}{1+\lambda} (x - \msa)^\top \Omega(x - %
\msa),
\end{align*}
where the interchange of the infimum operator with the innermost maximum operator is justified thanks to~\citet[Proposition~5.5.4]{ref:bertsekas2009convex}. Using a change of variables $\dualvar \leftarrow
\lambda + \nu$, problem~\eqref{eq:non-param:1} is equivalent to
\begin{equation*}
\Max{\dualvar \ge \lambda \ge 0} ~\left\{ \varphi(\dualvar, \lambda) \Let %
\Inf{\Omega \in \PD^d} -\dualvar \overline\rho - \dualvar \log\det \Omega + %
\dualvar\Tr{\Omega \covsa} + \frac{\lambda}{1+\lambda} (x - \msa)^\top
\Omega(x - \msa) \right\}.
\end{equation*}
If $\dualvar = \lambda = 0$, we have $\varphi(0, 0) = 0$. For any $\lambda
\ge 0$ and $\dualvar \ge \lambda$ such that $\dualvar > 0$, the inner
minimization admits the optimal solution \be \label{eq:Omega-opt} \Omega\opt%
(\lambda, \dualvar) = \Big( \covsa + \frac{\lambda}{\dualvar(1+\lambda)} (x
- \msa)(x - \msa)^\top \Big)^{-1}. \ee
Furthermore, because $\dualvar > 0$, the inner minimization problem has a strictly convex objective function over $\PD^d$, in this case, the minimizer $\Omega\opt(\lambda, \dualvar)$ is unique. By substituting the value of the minimizer $\Omega\opt(\lambda, \dualvar)$, we obtain
\begin{align*}
\varphi(\dualvar, \lambda) &= ~ -\dualvar \rho + \dualvar \log\det \Big(
I_d + \frac{\lambda}{\dualvar(1+\lambda)} \covsa^{-\half}(x - \msa)(x - \msa)^\top \covsa^{-\half} \Big) \\
&= -\dualvar \rho + \dualvar\log \Big( 1 + \frac{\lambda}{\dualvar%
(1+\lambda)} (x - \msa)^\top \covsa^{-1} (x - \msa) \Big)
\end{align*}
where $\covsa^{-\half}$ denotes the inverse of the unique principal square
root of $\covsa$. In the second equality, we have used~\citet[Fact~2.16.3]{ref:bernstein2009matrix} which implies that
\begin{equation*}
\det (I_d + ab^\top) = 1 + b^\top a \qquad \forall (a, b) \in \R^d \times \R%
^d.
\end{equation*}
In the next step, we show that for any $\dualvar \ge \lambda$, the optimal
solution for the variable $\lambda$ is $\lambda\opt(\dualvar) = \dualvar$.
To this end, rewrite the above optimization problem as a two-layer
optimization problem
\begin{equation*}
\Max{\dualvar \ge 0}~ \Max{\substack{\lambda \ge 0 \\ \lambda \le \dualvar}}%
~ \varphi(\dualvar,\lambda) .
\end{equation*}
This claim is trivial if $\dualvar = 0$ because in this case, the only
feasible solution for $\lambda$ is $\lambda\opt(0) = 0$. If $\dualvar > 0$,
the gradient of $\varphi$ in the variable $\lambda$ satisfies
\begin{equation*}
\frac{\partial \varphi}{\partial \lambda} = \frac{\dualvar
(x-\msa)^\top \covsa^{-1} (x-\msa)}{(1+\lambda)(\gamma(1+\lambda) + \lambda(x-\msa)^\top 
\covsa^{-1} (x-\msa))} \geq 0 \quad \forall \lambda \in [0, \dualvar],
\end{equation*}
which implies that at optimality, we have $\lambda\opt(\dualvar) = \dualvar$%
. Thus, we can eliminate the variable $\lambda$ and obtain the equivalent
univariate optimization problem
\begin{equation*}
\Max{\dualvar \ge 0 } ~ -\dualvar \rho + \dualvar\log \Big( 1 + \frac{1}{1+\dualvar} (x - \msa)^\top \covsa^{-1} (x - \msa) \Big).
\end{equation*}
Converting this problem into a minimization problem gives the formulation~%
\eqref{eq:non-param:refor2}. By studying the objective function of~%
\eqref{eq:non-param:refor2} and its gradient and Hessian\footnote{%
The closed form expressions can be found in Section~\ref{sect:gradient}.},
one can verify that this objective function is strictly convex and it tends
to infinity as $\dualvar$ goes to infinity. This implies that the minimizer $%
\dualvar\opt$ of~\eqref{eq:non-param:refor2} exists and is unique. Let $%
\dualvar\opt$ be the minimizer of~\eqref{eq:non-param:refor2}, one can
reconstruct $\cov\opt$ from~\eqref{eq:Omega-opt} and $\mu\opt$ from Lemma~\ref{lemma:g-refor}, which gives the expression~\eqref{eq:non-param:refor1}. This
observation completes the proof.
\end{proof}


\section{Proof of Section~\protect\ref{sect:param}}

\begin{proof}[Proof of Theorem~\protect\ref{thm:Gauss}]
Evaluating the optimistic score under the Gaussian assumption is
equivalent to solving a non-convex minimization problem \be \label{eq:likelihood} \min \left\{ (\m - x)^\top \cov^{-1} (\m - x) + \log\det \cov%
: (\m, \cov) \in \U_\rho(\msa, \covsa) \right\}, \ee
where the minimization operator is justified by the compactness of the
uncertainty set $\mc U_{\rho}(\msa, \covsa)$ in Lemma~\ref{lemma:U-compact}.
Define $\overline \rho \Let \rho + d + \log\det \covsa$. Using a
reparametrization $\Omega \leftarrow \cov^{-1}$, problem~%
\eqref{eq:likelihood} admits an equivalent formulation
\begin{equation*}
\begin{array}{cl}
\min & \min ~ (\m - x)^\top \Omega (\m - x) - \log\det \Omega \\
& \st ~~~ \m \in \R^d,~(\m - \msa)^\top \Omega (\m - \msa) \leq \overline\rho - %
\Tr{\covsa \Omega} + \log\det \Omega \\
\st & \Omega \in \PD^d,~\Tr{\covsa \Omega} - \log\det \Omega \leq \overline\rho,%
\end{array}%
\end{equation*}
where we emphasize that the constraint~$\Tr{\covsa \Omega} - \log\det \Omega
\leq \overline\rho$ is redundant to ensure the feasibility of the inner problem
over $\mu$ for each admissible $\Omega$. Applying Lemma~\ref{lemma:g-refor}
to solve the inner problem over $\m$ for any given $\Omega \in \PD^d$,
problem~\eqref{eq:likelihood} is equivalent to
\begin{equation*}
\left\{
\begin{array}{cl}
\min & \Max{\lambda \ge 0}~ -\lambda (\overline\rho - \Tr{\covsa\Omega})
-(\lambda+1) \log\det \Omega + \frac{\lambda}{1+\lambda} (x - \msa)^\top
\Omega (x - \msa) \\
\st & \Omega \in \PD^d,~\Tr{\covsa \Omega} - \log\det \Omega \leq \overline\rho.%
\end{array}
\right.
\end{equation*}
Follow a similar steps as in the proof of Theorem~\ref{thm:nonparam}, we
find that problem~\eqref{eq:likelihood} is equivalent to
\begin{equation*}
\Max{\dualvar \ge \lambda \ge 0} ~\left\{ \varphi(\dualvar, \lambda) \Let %
\Inf{\Omega \in \PD^d} -\dualvar \overline\rho - (\dualvar + 1) \log\det \Omega + %
\dualvar\Tr{\Omega \covsa} + \frac{\lambda}{1+\lambda} (x - \msa)^\top
\Omega(x - \msa) \right\}.
\end{equation*}
For any $\lambda \ge 0$ and $\dualvar \ge \lambda$ such that $\dualvar > 0$,
the inner minimization admits the optimal solution \be \label{eq:param:Omega-opt} \Omega\opt(\lambda, \dualvar) = \Big( \frac{\dualvar}{1+\dualvar} \covsa + \frac{\lambda}{(1+\dualvar)(1+\lambda)} (x - \msa)(x - %
\msa)^\top \Big)^{-1}, \ee
By substituting the value of the minimizer $\Omega\opt(\lambda, \dualvar)$,
we obtain
\begin{align*}
\varphi(\dualvar, \lambda) &= ~ (d + \log\det \covsa) - \dualvar\rho - {%
d}(\dualvar + 1) \log\Big( 1 + \frac{1}{\dualvar} \Big) + (1+%
\dualvar) \log \Big(1 + \frac{\lambda (x - \msa)^\top \covsa ^{-1}(x - \msa)%
}{\dualvar (1+\lambda)}\Big),
\end{align*}
If $\dualvar = \lambda = 0$, we have $\varphi(0, 0) = -\infty$ because the
objective value in this case tends to $-\infty$ as $\Omega$ tends to $%
+\infty $. Thus without loss of optimality, we can omit the variable $%
\dualvar = \lambda = 0$ from the outer maximization problem because this set of solution is never optimal. Problem~%
\eqref{eq:likelihood} is hence equivalent to the following two-layer
optimization problem \be \label{eq:param-2layer} \Max{\dualvar > 0}~%
\Max{0
\le \lambda \le \dualvar}~ \varphi(\dualvar, \lambda), \ee
where we emphasize that the feasible set for $\dualvar$ is over the open set $(0, +\infty)$. For any $\dualvar > 0$, the gradient of $\varphi$ in $\lambda$ satisfies
\begin{equation*}
\frac{\partial \varphi}{\partial \lambda} = \frac{\dualvar (1 + \dualvar) (x
- \msa)^\top \covsa^{-1} (x - \msa)}{\dualvar (1 + \lambda) + \lambda (x - %
\msa)^\top \covsa^{-1} (x - \msa)} \ge 0 \qquad \forall \lambda \in [0, %
\dualvar],
\end{equation*}
which implies that the inner maximization problem in~\eqref{eq:param-2layer}
admits the optimal solution $\lambda\opt(\dualvar) = \dualvar$. We thus have
\begin{equation*}
\Max{\dualvar > 0}~ (d + \log\det \covsa) - \dualvar\rho - d(%
\dualvar + 1) \log\Big( 1 + \frac{1}{\dualvar} \Big) + (1+\dualvar) \log
\Big(1 + \frac{ (x - \msa)^\top \covsa ^{-1}(x - \msa)}{ (1+\dualvar)}%
\Big)
\end{equation*}
Dropping the constant term in the objective function and converting the
problem into the minimization form results in problem~\eqref{eq:gauss:refor2}
. By studying the objective function of~\eqref{eq:gauss:refor2} and its
gradient and Hessian\footnote{The closed form expressions can be found in Section~\ref{sect:gradient}.},
one can verify that this objective function is strictly convex and it tends
to infinity as $\dualvar$ goes to infinity. This implies that the minimizer $\dualvar\opt$ of~\eqref{eq:gauss:refor2} exists and is unique. Let $\dualvar\opt$ be the minimizer of~\eqref{eq:gauss:refor2}, one can reconstruct $\cov%
\opt$ from~\eqref{eq:param:Omega-opt} and $\mu\opt$ from Lemma~\ref{lemma:g-refor}, which give expression~\eqref{eq:gauss:refor1}. This finishes the proof.
\end{proof}


\section{Calculations of the Gradients and Hessians}
\label{sect:gradient}

 Throughout this section, we use the shorthand $\alpha =(x-\msa)^{\top }\covsa^{-1}(x-\msa)\geq 0$. Denote momentarily by $%
\varphi _{1}:\R_{+}\rightarrow \R$ the objective function of problem~%
\eqref{eq:non-param:refor2}, that is,
\begin{equation*}
\varphi _{1}(\dualvar)=\dualvar\rho -\dualvar\log \Big( 1+\frac{\alpha }{1+%
\dualvar}\Big) .
\end{equation*}%
The gradient and Hessian of $\varphi _{1}$ are
\begin{align}
\frac{\partial \varphi _{1}}{\partial \dualvar}& =\rho -\log \Big( 1+\frac{\alpha }{1+\dualvar}\Big) +\frac{\dualvar\alpha }{(1+\dualvar)[1+\dualvar+\alpha ]},  \notag \\
\frac{\partial^{2}\varphi_{1}}{\partial \dualvar^{2}}& =\frac{\alpha (2+2%
\dualvar+2\alpha +\alpha \dualvar)}{(1+\dualvar)^{2}(1+\dualvar+\alpha )^{2}}%
\geq 0.  \notag
\end{align}%
Now, denote momentarily by $\varphi _{2}:\R_{++}\rightarrow \R$ the
objective function of problem~\eqref{eq:gauss:refor2}, that is,
\begin{equation*}
\varphi _{2}(\dualvar)=\dualvar\rho +{d}(\dualvar+1)\log \Big( 1+\frac{1}{\dualvar}\Big) -(1+\dualvar)\log \Big( 1+\frac{\alpha }{(1+\dualvar)}\Big) .
\end{equation*}%
The gradient and Hessian of $\varphi _{2}$ are
\begin{align*}
\frac{\partial \varphi _{2}}{\partial \dualvar}& =\rho +d\left[ \log \Big(1+\frac{1}{\dualvar}\Big) -\frac{1}{\dualvar}\right] -\left[ \log \Big( 1+\frac{\alpha }{1+\dualvar}\Big) -\frac{\alpha }{1+\dualvar+\alpha }\right]
, \\
\frac{\partial ^{2}\varphi _{2}}{\partial \dualvar^{2}}& =\frac{d}{\dualvar%
^{2}(1+\dualvar)}+\frac{\alpha ^{2}}{(1+\dualvar+\alpha )^{2}(1+\dualvar)}
\geq 0.
\end{align*}

\section*{Acknowledgements} We gratefully acknowledge support from the following NSF grants~1915967,~1820942,~1838676 as well as the China Merchants Bank.

\bibliographystyle{icml2020}
\bibliography{bibliography, rob_bayesian}
\end{document}